\documentclass[12pt]{article}
\usepackage{amsmath}
\usepackage{natbib}
\usepackage{graphicx}
\usepackage{amsfonts}
\usepackage{amssymb}
\usepackage{mathrsfs}
\usepackage{yfonts}
\usepackage{subfig}
\usepackage{enumitem}
\usepackage{multicol}
\usepackage{multirow}
\usepackage{graphicx}
\usepackage[toc,page]{appendix}
\usepackage{hyperref}
\usepackage{amsthm} 
\usepackage{color}
\usepackage{colortbl}
\usepackage{mathtools}
\usepackage{subfig}
\usepackage[export]{adjustbox}

\usepackage{float}
\usepackage{natbib}
\usepackage[english]{babel}

\usepackage{graphicx}
\usepackage{caption}


\addtolength{\oddsidemargin}{-.5in}%
\addtolength{\evensidemargin}{-.5in}%
\addtolength{\textwidth}{1in}%
\addtolength{\textheight}{-.3in}%
\addtolength{\topmargin}{-.8in}%

\usepackage{natbib}
\bibliographystyle{abbrvnat}
\setcitestyle{authoryear,open={(},close={)}}




\newtheorem{Thm}{Theorem}
\newtheorem{prop}{Proposition}

\newtheorem{Assump}{Assumption}

\def\spacingset#1{\renewcommand{\baselinestretch}%
{#1}\small\normalsize} \spacingset{1}

\title{Semi-Supervised Empirical Risk Minimization: \\
Using unlabeled data to improve prediction}
\author{Oren Yuval \\Department of Statistics, Tel-Aviv University, Tel-Aviv, Israel, 69978\vspace{5mm} \\ Saharon Rosset\\Department of Statistics, Tel-Aviv University, Tel-Aviv, Israel, 69978 }


\begin{document}

\maketitle

\begin{abstract}
We present a general methodology for using unlabeled data to design semi supervised learning (SSL) variants of the Empirical Risk Minimization (ERM) learning process. Focusing on generalized linear regression, we analyze of the effectiveness of our SSL approach in improving prediction performance. The key ideas are carefully considering the null model as a competitor, and utilizing the unlabeled data to determine signal-noise combinations where SSL outperforms both supervised learning and the null model. We then use SSL in an \textit{adaptive} manner based on estimation of the signal and noise. 

In the special case of linear regression with Gaussian covariates, we prove that the non-adaptive SSL version is in fact not capable of improving on both the supervised estimator and the null model simultaneously, beyond a negligible $O(1/n)$ term. On the other hand, the adaptive model presented in this work, can achieve a substantial improvement over both competitors simultaneously, under a variety of settings. This is shown empirically through extensive simulations, and extended to other scenarios, such as non-Gaussian covariates, misspecified linear regression, or generalized linear regression with non-linear link functions.
\end{abstract}

\vspace{3mm}

\textbf{Keywords}: Predictive modeling; Semi-supervised regression; Generalized linear model.

\newpage
\spacingset{1.5}
\section{Introduction}
\subsection{Background and related work}
In many applications, labeled data used for learning processes can be much more expensive than unlabeled data. In the situation where a large amount of unlabeled is available but only a small amount of labeled data, we are motivated to use the unlabeled data to improve the prediction performance of a given supervised learning algorithm by applying semi-supervised learning (SSL) approaches. The question of whether unlabeled data are helpful and if so, how they can be effectively used, has been studied extensively. Some SSL methods and their effectiveness are described by \cite{zhou2014semi}, by \cite{singh2009unlabeled} and \cite{zhu2005semi}, mainly in the context of classification problems. More recent works \citep{kingma2014semi,salimans2016improved,da2019generative,sun2020deep,han2020unsupervised}, present new techniques to improve state-of-the-art deep generative models using unlabeled data. These methods involve complex architectures and require extensive computation. Moreover, theoretical analysis regarding their performance is hard to obtain.

Another class of related works \citep{javanmard2014confidence,javanmard2018debiasing,bellec2019biasing,bellec2018prediction}, discusses the usefulness of unlabeled data in the Lasso-regularized sparse linear regression. Of these, the most closely related to this work is by \cite{bellec2018prediction}, which proposes some new adaptations of the Lasso, and establishes oracle inequalities for the prediction performance under some restriction over the \textit{problem setting} (i.e. distribution of the covariates, sample size, dimensionality, and sparsity). On the other hand, only a few works have studied the usefulness of unlabeled data to improve the classical ordinary least squares (OLS) regression. \cite{tarpey2014paradoxical} presented a semi-supervised estimator $\tilde{\beta}$ that uses the known distribution of the covariates and compared its variance to that of the supervised standard OLS estimator $\hat{\beta}$. The conclusion was that $\tilde{\beta}$ outperforms $\hat{\beta}$ only if the data is noisy enough, or if the dimension of the problem is high enough. \cite{chakrabortty2018efficient} presented an algorithm for improving linear regression using an imputation step and a follow-up refitting step. This approach requires some assumptions on the effectiveness of the imputation step, such as asymptotic normality of the OLS estimator $\hat{\beta}$. In their recent work, \cite{zhang2019semi} proposed an estimator of the population mean that uses unlabeled data combined with the least squares method. The new estimator provides a shorter confidence interval compared to the traditional sample mean. In subsequent work, \cite{azriel2016semi} aim to improve the least squares estimator by transforming the regression problem into a mean estimation problem. As in \cite{chakrabortty2018efficient}, they show that the semi-supervised estimator can improve upon the standard OLS estimator only when the linear model is biased.

In this paper, we present a general approach for using unlabeled data in SSL for prediction tasks. This approach also yields a procedure that utilizes the unlabeled data to determine whether or not it is helpful to improve prediction performance under an arbitrary problem setting. We demonstrate the suggested methodology on generalized linear regression in the under-parameterized regime, by deriving dedicated formulas according to the model setting, and showing by simulations that these formulas are indeed accurate and practical. We also provide a theoretical result stating that in some cases (such as in linear regression with Gaussian covariates), the suggested SSL can improve either on the standard supervised estimator or the null model, that ignores the covariates, but not on both at once.


\subsection{Notations, assumptions, and main idea}
A statistical learning process seeks to fit a predictor $\hat{f}_T:\mathbb{R}^p \to \mathbb{R}$ that maps from a covariate vector $x\in \mathbb{R}^p$ to a predicted response $\hat{y}\in \mathbb{R}$, based on a training data set $T$. In the supervised setting: $T=(X,Y)$, where $X\in \mathbb{R}^{n\times p}$ and $Y \in \mathbb{R}^n$ such that $(x_1,y_1),...,(x_n,y_n)$ are i.i.d. according to some joint distribution $P_{xy}$, and we focus on the case: $n>p$. In the semi-supervised setting: $T=(X,Y,Z)$, where $Z\in \mathbb{R}^{m\times p}$ is the set of unlabeled data with $z_1,...,z_m$ i.i.d. observations from distribution $P_{x}$. For simplicity, we assume that the distribution $P_{x}$ is centered around zero, i.e., $\mathbb{E}[x] = \textbf{0}_p$. We also assume constant conditional variance, meaning that:
\begin{equation*}
    y= f(x)+\epsilon =  \mathbb{E}[y|x] +\epsilon \hspace{3mm};\hspace{3mm}
    \mathbb{V}ar(\epsilon) = \mathbb{V}ar(y|x) = \sigma^2.
\end{equation*}

For the sake of our theoretical analysis, we assume a \textit{total information} scenario which means that $m\to \infty$ and therefore we are able to estimate precisely $\mathbb{E}\left[\varphi (x) \right]$ for any function $\varphi$, by using the set $Z$:
\begin{equation*}
    \mathbb{E}_x\left[\varphi (x) \right] \approx \frac{1}{m} \sum_{i=1}^m \varphi (z_i),
\end{equation*}
and we assume the approximation is arbitrarily good. Thus, we study the question of whether or not the knowledge of $ \mathbb{E}_x\left[\varphi (x) \right]$ for some well-defined functions $\varphi$, can be used effectively to improve prediction. In the empirical simulations, however, we estimate the expected values by averaging $\varphi$ over a considerably large but finite set of unlabeled data.     

Denoting by $(x_0,y_0)$ an independent draw from $P_{xy}$, the learning process aims to minimize the out-of-sample prediction error, $R_{T} = \mathbb{E}_{x_0y_0}\left[ L\left(\hat{f}_T;x_0,y_0 \right) \right]$, according to some loss function $L$ that depends on $\hat{f}_T$. A common supervised learning process is the Empirical Risk Minimization (ERM) which suggests to minimize the loss over the training data:
\begin{equation*}
    \hat{f}_T = \underset{f\in \mathcal{F}}{argmin} \left\{  \frac{1}{n}\sum_{i=1}^n L\left( f;x_i,y_i \right)  \right\}, 
\end{equation*}
where $\mathcal{F}$ is a fixed class of functions. The prediction performance of the learning process is measured by the mean of $R_T$ over all possible training samples $T$, which is denoted by $R$.

The main idea we suggest here under the name \textit{semi-supervised ERM}, is to break the loss function into sum of elements that can be estimated separately. Then, we use the unlabeled data to better estimate any element in the loss function that depends only on the covariate vector $x$. For example, if the loss function can be written as: $L\left(\hat{f}_T;x,y \right) =L_1\left(\hat{f}_T;x \right)+ L_2\left(\hat{f}_T;x,y \right) ,$ the out-of-sample prediction error which is the objective for minimization can be viewed as: $R_T =  \mathbb{E}_{x_0}\left[ L_1\left(\hat{f}_T;x_0 \right) \right] +  \mathbb{E}_{x_0y_0}\left[ L_2\left(\hat{f}_T;x_0,y_0 \right) \right]$. The first term can be estimated separately and precisely by using the unlabeled data, and this leads us to suggest the following fitting as a semi-supervised ERM procedure:
\begin{equation} \label{gen_tilde}
    \tilde{f}_T = \underset{f\in \mathcal{F}}{argmin} \left\{ \mathbb{E}_{x} \left[ L_1\left(f;x \right)  \right]+ \frac{1}{n}\sum_{i=1}^n L_2\left( f;x_i,y_i \right)    \right\}.
\end{equation}
Assuming that the loss function can be decomposed even more, such as: $L\left(\hat{f}_T;x,y \right) =L_1\left(\hat{f}_T;x \right)+ L_{2,1}\left(\hat{f}_T;x \right)  L_{2,2}\left(y \right)$, then we can use the same logic, and decompose the expectation of product, $\mathbb{E}_{x_0y_0}\left[ L_{2,1}\left(\hat{f}_T;x_0 \right)  L_{2,2}\left(y_0 \right) \right]$, into product of expectations plus the covariance. By that, we utilize the unlabeled data to precisely estimate the term $ \mathbb{E}_{x} \left[ L_{2,1}\left(f;x \right)  \right]$, and this leads us to define a more sophisticated optimization problem for the semi-supervised ERM procedure: 
\begin{align} \label{gen_Breve}
    \Breve{f}_T = \underset{f\in \mathcal{F}}{argmin} &\Big\{ \mathbb{E}_{x} \left[ L_1\left(f;x \right)  \right]  + \mathbb{E}_{x} \left[ L_{2,1}\left(f;x \right)  \right]  \overline{L_{2,2}\left(Y \right)}+\widehat{\mathbb{C}ov}\left( L_{2,1}\left(f;X \right),  L_{2,2}\left(Y \right)  \right)    \Big\}.  
\end{align}



This approach can be used to transform any supervised ERM process into a semi-supervised one. However, in this work we demonstrate this idea on a common class of ERM models, which is the generalized-linear models (GLM), where we are looking for the best linear predictor $\beta^*$ that satisfies:

\begin{equation*}
    \beta^* = \underset{\beta\in \mathbb{R}^p}{argmin} \left\{ \mathbb{E}_{x_0y_0}\left[ L\left(\beta;x_0,y_0 \right) \right]   \right\},
\end{equation*}
where the loss function $L$ can be written as $L\left(\beta;x_0,y_0 \right) = G(x_0^T\beta)- x_0^T\beta y_0$, with $G'=g$, for some known monotone increasing \textit{link function} $g:\mathbb{R}\to \mathbb{R}$. This learning procedure is usually applied under the assumption that $\mathbb{E}[y|x]= g(x^T\beta)$, for some $\beta \in \mathbb{R}^p$. In the GLM setting, the supervised ERM suggests to solve the following optimization problem:
\begin{align}\label{e1}
\hat{\beta} &= \underset{\beta\in \mathbb{R}^p}{argmin} \left\{ \hat{L}(\beta,X,Y) \right\} = \underset{\beta\in \mathbb{R}^p}{argmin} \left\{\frac{1}{n}\sum_{i=1}^n G(x_i^T\beta)-x_i^T\beta y_i \right\}. 
\end{align}
This fitting procedure covers linear and logistic regression, among many others. We plug-in the loss function $L\left(\beta;x_0,y_0 \right)$ into (\ref{gen_tilde}) and (\ref{gen_Breve}), to define our semi-supervised estimators of $\beta$: \begin{align}\label{e2}
     \tilde{\beta} &= \underset{\beta\in \mathbb{R}^p}{argmin} \left\{  \tilde{L}(\beta,X,Y) \right\} =\underset{\beta\in \mathbb{R}^p}{argmin} \left\{ \mathbb{E}_x \left[G(x^T\beta)\right]- \frac{1}{n}\sum_{i=1}^n x_i^T\beta y_i \right\}, \\ \label{e3}
     \Breve{\beta} &= \underset{\beta\in \mathbb{R}^p}{argmin} \left\{  \Breve{L}(\beta,X,Y) \right\}=\underset{\beta\in \mathbb{R}^p}{argmin} \left\{ \mathbb{E}_x\left[G(x^T\beta)\right]- \mathbb{E}_{x}  \left[ x^T\beta\right] \overline{Y} - \widehat{\mathbb{C}ov}(X\beta,Y) \right\},
\end{align}
where $\widehat{\mathbb{C}ov}(X\beta,Y)  = \sum_{i=1}^n\left( x_i^T\beta-  \overline{X\beta} \right) \left(  y_i - \overline{Y} \right)/n$.


Throughout this work we study and compare between the prediction performance of the three estimators, $\hat{\beta}$, $\tilde{\beta}$, and $\Breve{\beta}$, as well as the null model, which we set to be the model that uses the known population mean, $\mu_0=\mathbb{E}_{x}\left[g(x^T\beta)\right]$, for prediction over any new point $x_0$. In accordance to previous works, we observed that in high-variance or high-dimension situations, the semi-supervised models are superior to the standard supervised model. However, in these situations both approaches may deliver inferior prediction performance. We choose to capture this effect by comparing them to the null model and challenging the semi-supervised approach to do better than the supervised, in situations where the predictions are actually useful. We consider our suggested semi-supervised estimator to be \textit{effective} if it performs better than both the standard estimator $\hat{\beta}$ and the null model. For the sake of practicality, we define the \textit{adaptive empirical estimator}, $\beta^D$, to be the model that chooses $\Breve{\beta}$ only when it is assumed to be effective, according to some data-based estimations. This will be clear in the sequel, as we analyze the prediction performance of the above estimators.

The idea of utilizing unlabeled data to improve prediction might seem contrary to the conditionality principle, which states that inference over $\beta$ should be conditioned on $X$, since $X$ is ancillary. However the goal of the learning process we discuss here is prediction over new random point $x_0$, rather than inference over the real vector $\beta$. It was shown by \cite{brown1990ancillarity}, Remark 2.1.3, that under linear model with Gassian noise, the prediction rule $x_0^T\hat{\beta}$ is dominated (in terms of squared loss) by the rule $x_0^T\hat{\beta}\left(1-\hat{\rho} \right)$, where $\hat{\rho}$ is a function of the MLE $\hat{\beta}$, the known covariates $X$, \textbf{and also} $\mathbb{E}_{x_0}[x_0x_0^T]$. The main result of that paper was that the estimation of the intercept term in linear regression, $\hat{\alpha}$, is dominated by another estimator of the form $\hat{\alpha}\left(1-\hat{\rho} \right)$, with $\hat{\rho}$ being more complex and using information about the distribution of $X$. The conclusion was that the least squared estimator $\hat{\alpha}$ is inadmissible. \cite{brown1990ancillarity} conclude with some remarks about ancillary statistics, distinguishing between statistical inference and point estimation (i.e. prediction), stating that point estimation can be improved in terms of integrated (unconditional) loss, by taking into account the distribution of ancillary statistic. 

In Section \ref{Semi-supervised OLS}, we focus on the special case of OLS model where the link function is the identity function. For a true linear model, we establish the "Sandwich" phenomenon for Gaussian covariates which states that the previously suggested semi-supervised estimator $\tilde{\beta}$ can not improve both on the standard OLS estimator and the null model simultaneously. However, our new-suggested estimator $\Breve{\beta}$ is capable of slightly improving on both competitors simultaneously, for an explicit range of signal-noise combinations. This result is further generalized to a wide class of distributions under asymptotic setting. In Section \ref{Uns_est} we extend the discussion to general distribution for the covariates and present the main methodology for identifying the usefulness of the unlabeled data according to the learning model and some prior assumptions. By simulations, we provide evidence that this methodology is indeed practical and accurate. Moreover, we find that for Uniform covariates, the semi-supervised estimators are effective for a wide range of signal-noise combinations. The scenario of misspecified linear regression is discussed in Section \ref{Accounting for bias}, concluding that the SSL may achieve a substantial improvement in some settings of mis-specification.

In Section \ref{Semi-supervised GLS-ERM}, we analyze the semi-supervised GLM-ERM procedure for general link function. We first show that the suggested optimization problems can be solved by gradient descent algorithm in a semi-supervised fashion. In Section \ref{Theoretical analysis}, we show that by using a quadratic approximation we can implement the same methodology and achieve approximate insights about the usefulness of the unlabeled data as for OLS. We then present in Section \ref{GLM_Simulations} results of extensive simulations that support the theoretical analysis. Further possible applications of the semi-supervised ERM approach are discussed in Section \ref{Discussion}.

\section{Semi-Supervised OLS} \label{Semi-supervised OLS}

\subsection{Distributional assumptions and variance analysis}

The OLS model is a special case of GLM-ERM where the link function is the identity function, and the loss function $L$ can be written as: $L\left(\beta;x,y \right) = (x^T\beta)^2/2- x^T\beta y$. Under the innocuous assumption of exchangeability  between expected value and derivative, the linear predictors from (\ref{e1}), (\ref{e2}), and (\ref{e3}) can be explicitly written:
\begin{equation*}
    \hat{\beta} = (X^TX)^{-1}X^TY \hspace{2mm};\hspace{2mm} \tilde{\beta} = H^{-1}X^TY  \hspace{2mm};\hspace{2mm} \Breve{\beta} = H^{-1}n\left(\mathbb{E}[X]\cdot \overline{Y}+\widehat{\mathbb{C}ov}(X,Y)  \right),
\end{equation*}
where $H= \mathbb{E}[X^TX]$, $\widehat{\mathbb{C}ov}(X,Y) \in \mathbb{R}^p$, and $\left[\widehat{\mathbb{C}ov}(X,Y)\right]_j = \widehat{\mathbb{C}ov}\left( [X]_j,Y \right)$.

The above formulas exploit the crucial requirement that $(X^TX)^{-1}$ exists in order to define $\hat{\beta}$, but not $ \tilde{\beta}$ and $\Breve{\beta}$. Since we are interested in comparing between the supervised estimator and the semi-supervised ones in terms of mean error (over all possible training samples), we shall take the following distributional
assumption:
\begin{Assump}\label{A1} The distribution $P_x$ satisfies that $(X^TX)^{-1}$ exists with probability 1.
\end{Assump} 
We note that this assumption refers only to the validity of the standard supervised
estimator and not the semi-supervised ones. We also note that it holds for any continuous
distribution $P_x$. Since Assumption \ref{A1} is crucial for any analysis of $\hat{\beta}$, it is taken as granted
throughout this section.

The estimator $\tilde{\beta}$ was presented by \cite{tarpey2014paradoxical} and discussed by \cite{cook2015tarpey}, and \cite{christensen2015covariance} (denoted  there also by $\tilde{\beta}$). In these works, they compare between $Var(\hat{\beta})$ and $Var(\tilde{\beta})$, and present a condition for $Var(\hat{\beta})-Var(\tilde{\beta})$ to be positive definite and therefore $\tilde{\beta}$ to be the better estimator under Gaussian covariates assumption. As far as we know, the explicit expression for $\Breve{\beta}$ is a novel one even when considering the special case of OLS.

The prediction performance of any linear predictor $\dot{\beta}$ generated based on a random training sample $T$, can be summarized into a bias-variance decomposition  adopted by \cite{rosset2018fixed}: 
\begin{align*}
    R(\dot{\beta}) 
     =&  \frac{1}{2}\Bigg\{ \mathbb{E}_{X,x_0}\left( \mathbb{E}\left[x_0^T\dot{\beta}|X,x_0  \right] -f(x_0) \right)^2 +  \mathbb{E}_{X,x_0}\left[ \mathbb{V}ar\left(x_0^T\dot{\beta}|X,x_0  \right) \right]   -\mathbb{E}_{x_0}\left[ (f(x_0))^2 \right]  \Bigg\} \nonumber \\
    =& \frac{1}{2}\Big\{B(\dot{\beta})+V(\dot{\beta})-C \Big\}. 
\end{align*}
Here $B$ is the squared \textit{bias} and $V$ is the \textit{variance} term associated with the model building procedure. The term $C$ is a constant that does not depend on the learning procedure. In this section, we use the decomposition above in order to compare between different models.

The comparison between the supervised estimator and the semi-supervised ones is relevant only if $R(\hat{\beta})$ is well defined, which requires that $V(\hat{\beta})$ is well defined. Regardless of the true model $f(x)$, the variance term $V(\hat{\beta})$ can be written as follows:
\begin{align*}
    V(\hat{\beta})&=\frac{1}{n}\text{tr}\left( \mathbb{E}_{X}\left[ (X^TX)^{-1}X^TI_n\sigma^2 X(X^TX)^{-1}   \right] H\right) =\frac{\sigma^2}{n}\text{tr}\left(\mathbb{E}_{X}\left[ (X^TX)^{-1}\right] H\right),
    \end{align*}
which is only defined if $Q=\mathbb{E}_{X}\left[ (X^TX)^{-1}\right]$ exists. We note that under Assumption \ref{A1}, the matrix $X^TX$ is invertible with probability $1$, and therefore its eigenvalues are all positive with probability $1$, and can be written as $0<\lambda_1 \leq ... \leq\lambda_p$. Moreover, the eigenvalues of $(X^TX)^{-1}$ are simply  $0<1/\lambda_p \leq ... \leq1/\lambda_1$. This leads us to the following distributional assumption:

\begin{Assump}\label{A2} The distribution $P_x$ satisfies that $ \mathbb{E}_{X}\left[  1/ \lambda_1  \right] $ is finite.
\end{Assump}
This is not a mild assumption, and in particular it subsumes Assumption \ref{A1}. However, as we show in the following Proposition, this assumption is necessary for $R(\hat{\beta})$ to be finite.  
\begin{prop}\label{P0}
The variance term $V(\hat{\beta})$ is either well defined (Assumption \ref{A2} holds), or  $V(\hat{\beta})=\infty$ (Assumption \ref{A1} holds but Assumption \ref{A2} is violated), or $\hat{\beta}$ does not exist with positive probability and therefore $R(\hat{\beta})$ is undefined.  
\end{prop}

\begin{proof} 
Under Assumption \ref{A1}, we can write the eigenvalues of $(X^TX)^{-1}$ as $0<1/\lambda_p \leq ... \leq1/\lambda_1$. We note that $H$ is P.S.D and invertible, and we can write its eigenvalues as $0<h_1 \leq ... \leq h_p$. Using the main result from \cite{bushell1990trace}, we can show that:
\[\mathbb{E}_{X}\left[ \text{tr}\left(  H (X^TX)^{-1} \right) \right] \geq \mathbb{E}_{X}\left[ \sum_j h_j\frac{1}{\lambda_j}   \right] \geq  h_1 \mathbb{E}_{X}\left[ 1/ \lambda_1 \right]  \]
On the other hand, using standard properties, we can show that:
\[\mathbb{E}_{X}\left[ \text{tr}\left(  H (X^TX)^{-1} \right) \right] \leq \text{tr}\left(  H \right) \mathbb{E}_{X}\left[ \sum_j\frac{1}{\lambda_j}   \right] \leq  \text{tr}\left(  H \right) p \mathbb{E}_{X}\left[  1/ \lambda_1  \right], \]
and we ca write:
\[  h_1 \mathbb{E}_{X}\left[ 1/ \lambda_1 \right] \leq \mathbb{E}_{X}\left[ \text{tr}\left(  H (X^TX)^{-1} \right) \right] \leq  \text{tr}\left(  H \right) p \mathbb{E}_{X}\left[  1/ \lambda_1  \right].\]

Thus, if $\mathbb{E}_{X}\left[ 1/ \lambda_1\right]<\infty$, $\text{tr}\left(  HQ \right) = \mathbb{E}_{X}\left[ \text{tr}\left(  H (X^TX)^{-1} \right) \right]$ is bounded, and $V(\hat{\beta})$ is well defined. If $\mathbb{E}_{X}\left[ 1/ \lambda_1\right]=\infty$, we get that $\text{tr}\left(  HQ \right)=\infty$ and also $V(\hat{\beta})=\infty$. If $(X^TX)^{-1}$ does not exist with positive probability, so is $\hat{\beta}$. In this scenario, any expectation involving a function of $\hat{\beta}$ over $X$ is undefined.  
\end{proof}

Analyzing the variance terms of the semi-supervised estimators, we find the they are finite, regardless of any assumption over $P_x$, and can be written as follows:
\begin{align}\label{V_tilde}
    V(\tilde{\beta}) &= \frac{1}{n}\text{tr}\left( \mathbb{E}_{X}\left[ \mathbb{V}ar\left(\tilde{\beta}|X  \right) \right] H \right) 
    =\frac{\sigma^2}{n}\text{tr}\left(H^{-1} \mathbb{E}_{X}\left[ X^TX\right]\right) = \frac{\sigma^2p}{n}, \\ \label{V_breve}
    V(\Breve{\beta})&=\frac{1}{n}\text{tr}\left( H^{-1} \mathbb{E}_{X}\left[ \mathbb{V}ar\left(X^TY-n\overline{X}\cdot\overline{Y}|X  \right) \right] \right) =\left(1-\frac{1}{n} \right) \frac{\sigma^2p}{n}.
    \end{align}
From the above formulas and Proposition \ref{P0}, we conclude that that if Assumption \ref{A2} does not hold, $R(\hat{\beta})$ is either unbounded or undefined, and therefore the SSL is trivially beneficial in terms of mean prediction performance. However, in this work we seek to focus on the common scenario where $R(\hat{\beta})$ is well defined, and compare it with the semi-supervised alternatives. Therefore, Assumption \ref{A2} is taken as granted throughout this Section.

Under Assumption \ref{A2}, the result by \cite{groves1969note} implies that the matrix $\mathbb{E}\left[(X^TX)^{-1} \right] -\left( \mathbb{E}[X^TX]\right)^{-1}$ is positive semi-definite. Thus we have: $ \text{tr}\left(QH \right) \geq \text{tr}\left(H^{-1}H \right) = p$, and we conclude that the variance term is guaranteed to decrease with the use of unlabeled data. Further more, as $\sigma^2$ increases, the benefit of using unlabeled data increases as well. In the sequel, we analyze the squared bias term according to assumed true model $f(x)$, and carry out a dedicated comparison between all the estimators under discussion.




\subsection{True linear model} \label{Unb_OLS}
In this scenario we assume that the linear model is correct, meaning that $f(x)=x^T\beta$ for some $\beta \in \mathbb{R}^p$. Under this assumption, we can see that $\mathbb{E}[\hat{\beta} | X] = \beta$, which means that $\hat{\beta}$ is an unbiased estimator of $\beta$ for any covariate matrix $X$. On the other hand, for the estimators $\tilde{\beta}$ and $\Breve{\beta}$, only the unconditional expected value is equal to $\beta$:
\begin{align*}
    \mathbb{E}[\tilde{\beta}]&= \mathbb{E}_X\left[\mathbb{E}[\tilde{\beta}|X]\right]= \mathbb{E}_X\left[H^{-1}\left(X^T\mathbb{E}[Y|X]\right)   \right] =H^{-1} \mathbb{E}_X[X^TX]\beta  =\beta,
\end{align*}
and in the same manner we can show that $\mathbb{E}[\Breve{\beta}]= \beta$. We note that $\mathbb{E}[\tilde{\beta}|X]$ and $\mathbb{E}[\Breve{\beta}|X]$ may be different from $\beta$ as $H^{-1}X^TX$ may be different from $I_p$. Focusing on the squared bias term, we can see that $B(\dot{\beta})=0$ for every estimator $\dot{\beta}$ that satisfies $\mathbb{E}[\dot{\beta}|X]=\beta$, since $\mathbb{E}[x_0^T\dot{\beta}|X,x_0]=x_0^T\beta=f(x_0)$. However, if $\mathbb{E}[\dot{\beta}]=\beta$ then:
\begin{align*}
    B(\dot{\beta}) &=\frac{1}{n}\text{tr}\left(\mathbb{E}_X\left[(\mathbb{E}[\dot{\beta}|X]-\beta)(\mathbb{E}[\dot{\beta}|X]-\beta)^T    \right] H \right) = \frac{1}{n}\text{tr}\left(\mathbb{V}ar_X\left(\mathbb{E}[\dot{\beta}|X]\right) H \right)
\end{align*}
We can place $\tilde{\beta}$ and $\Breve{\beta}$ instead of $\dot{\beta}$ to get the bias terms as follows: 
\begin{align} \label{B_tilde_expr}
    B(\tilde{\beta}) &= \frac{1}{n} \text{tr}\left( H^{-1}\mathbb{V}ar_X(X^TX\beta) \right),\\
    B(\Breve{\beta}) &= \frac{1}{n} \text{tr}\left( H^{-1}\mathbb{V}ar_X\left( n\mathbb{E}[X]\overline{X\beta} + n\widehat{ \mathbb{C}ov}(X,X\beta) \right) \right), \nonumber
\end{align}
where: $\overline{X\beta}= \sum_{i=1}^n x_i^T\beta/n$, $\left[\mathbb{E}[X] \right]_j= \mathbb{E}[x_j]$, and $\left[\widehat{\mathbb{C}ov}(X,X\beta)\right]_j = \widehat{\mathbb{C}ov}\left( [X]_j,X\beta \right)$.

We conclude that using the unlabeled data in this scenario induces some bias to the prediction error. We get a \textit{bias-variance trade-off} between the supervised and the semi-supervised approaches. The increase in the bias term of the semi-supervised estimators does not depend on $\sigma^2$. Therefore, for any distribution of $X$ and real vector $\beta$, there is a threshold $\tilde{F}(\beta)$, where higher values of $\sigma^2$ will make the semi-supervised model superior to the regular OLS model. In particular, we can write:
\begin{align} \label{F_tilde}
    R(\tilde{\beta})< R(\hat{\beta}) & 
    \iff \sigma^2 > \frac{\text{tr}\left(H^{-1} \mathbb{V}ar_X(X^TX\beta) \right)}{ \text{tr}\left(QH  \right)-p} := \tilde{F}(\beta).
\end{align}
In the same manner we can write the threshold $\Breve{F}(\beta)$, where the estimator $\Breve{\beta}$ becomes better than the OLS estimator $\hat{\beta}$: 
\begin{align}\label{F_Breve}
    R(\Breve{\beta})< R(\hat{\beta})  
    &\iff   \sigma^2 > \frac{\text{tr}\left(H^{-1} \mathbb{V}ar_X\left( \mathbb{E}[X]\overline{X\beta} + \widehat{ \mathbb{C}ov}(X,X\beta) \right) \right)}{ \text{tr}\left(QH  \right)-p \frac{n-1}{n}} := \Breve{F}(\beta).
\end{align}

The general conclusion that the semi-supervised process is guaranteed to be better for sufficiently noisy data, coincides with that of \cite{tarpey2014paradoxical}. Under the assumption of Gaussian covariates and by the properties of Wishart distribution, an explicit inequality is presented by \cite{christensen2015covariance} in terms of $n$, $p$, and the signal-noise combination. 

However, as discussed in the introduction, for high enough value of $\sigma^2$, the  null model may be better than both models, and the learning process is actually not useful. In this particular case,  the null model predicts the value $0$ for every point $x_0$. It is clear that the variance term of the null model is zero, and the squared bias term can be written as $B(0) =  \mathbb{E}_{x_0}\left(x_0^T\beta \right)^2 =\beta^TH\beta /n$. We can see that the error associated with the null model does not depend on $\sigma^2$, and therefore there is a threshold $\tilde{U}(\beta)$, where lower values of $\sigma^2$ make the semi-supervised model superior to the null model. In particular, we can write:
\begin{align} \label{U_tilde}
    R(\tilde{\beta})< R(0) 
     \iff & \sigma^2 < \frac{1}{p} \left(   \beta^T H  \beta  - \text{tr}\left(\mathbb{V}ar_X(X^TX\beta)H^{-1}\right)  \right) := \tilde{U}(\beta), \\ \label{U_Breve}
     R(\Breve{\beta})< R(0)  \iff &  
       \sigma^2 < \frac{n}{p(n-1)} \left(   \beta^T H  \beta  - nB(\Breve{\beta}) \right) := \Breve{U}(\beta).
\end{align}

We conclude that $\tilde{\beta}$ is effective when $ \tilde{F}(\beta) < \sigma^2 <\tilde{U}(\beta)$, and $\Breve{\beta}$ is effective when $ \Breve{F}(\beta) < \sigma^2 <\Breve{U}(\beta)$. As we show next, when the covariates are Gaussian, $\tilde{F}(\beta) = \tilde{U}(\beta)$ for every vector $\beta$, which means that there is no value of $\sigma^2$ for which the estimator $\tilde{\beta}$ is effective. We call it the \textit{Sandwich phenomenon} because of the fact that $R(\tilde{\beta})$ is always between $R(\hat{\beta})$ and $R(0)$. On the other hand, we find that $R(\Breve{\beta})$ is smaller than $R(\tilde{\beta})$ with $O(1/n)$ difference, and in accordance $\Breve{F}(\beta) < \Breve{U}(\beta)$. We conclude that $\Breve{\beta}$ can achieve an improvement both on the standard OLS model and the null model when $\sigma^2$ is within the interval $[\Breve{F}(\beta), \Breve{U}(\beta)]$.   

\begin{Thm} \label{Normal_Covariates_Thm}
Assuming true linear model and Gaussian distribution for the covariates, the semi-supervised estimator $\tilde{\beta}$ can not improve both on the standard OLS model and the null model simultaneously. The semi-supervised estimator $\Breve{\beta}$ uniformly satisfies that $V(\Breve{\beta})/V(\tilde{\beta})=1-1/n$, and $B(\Breve{\beta})/ B(\tilde{\beta}) \in \left[1-2/n,1-1/n\right]$, and therefore $R(\Breve{\beta})<R(\tilde{\beta})$. Consequently, $\Breve{F}(\beta) < \Breve{U}(\beta)$  and there is a guaranteed range of $\sigma^2$ where $\Breve{\beta}$ improves both on the standard OLS model and the null model with an $O(1/n)$ term. 

\end{Thm}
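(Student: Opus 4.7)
The plan is to address the four claims in sequence: the variance ratio, the bias ratio, the sandwich identity $\tilde{F}(\beta)=\tilde{U}(\beta)$, and the strict gap $\Breve{F}(\beta)<\Breve{U}(\beta)$. The variance ratio is immediate from the formulas already displayed: by (\ref{V_tilde}) and (\ref{V_breve}), $V(\tilde{\beta})=\sigma^2 p/n$ and $V(\Breve{\beta})=(1-1/n)\sigma^2 p/n$, so the ratio is $1-1/n$ uniformly, with no distributional assumption required.

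For the bias ratio under Gaussian covariates I would apply Cochran's theorem to the decomposition $X^TX = S + n\bar{x}\bar{x}^T$, where $S := \sum_i (x_i - \bar{x})(x_i - \bar{x})^T$: this yields $X^TX \sim W_p(n, \Sigma)$, $S \sim W_p(n-1, \Sigma)$, and $S$ independent of $\bar{x}$. Since $\mathbb{E}[x]=0$, the conditional expectations reduce to $\mathbb{E}[\tilde{\beta}\,|\,X]=H^{-1}X^TX\beta$ and $\mathbb{E}[\Breve{\beta}\,|\,X]=H^{-1}S\beta$, so both bias terms collapse to a trace of $H^{-1}\mathbb{V}ar(W\beta)$ for a Wishart $W$. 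A one-line Isserlis computation (or the standard Wishart second-moment identity) gives
\begin{equation*}
\mathbb{V}ar(W\beta) = k\left[(\beta^T\Sigma\beta)\Sigma + \Sigma\beta\beta^T\Sigma\right]\quad\text{for }W\sim W_p(k,\Sigma),
\end{equation*}
and combined with $H=n\Sigma$ this gives $\text{tr}(H^{-1}\mathbb{V}ar(W\beta))=(k/n)(p+1)\beta^T\Sigma\beta$. Substituting $k=n$ and $k=n-1$ in turn yields $B(\tilde{\beta})=(p+1)\beta^T\Sigma\beta/n$ and $B(\Breve{\beta})=(n-1)(p+1)\beta^T\Sigma\beta/n^2$, so the bias ratio equals $1-1/n$ exactly --- the upper endpoint of the claimed interval $[1-2/n,1-1/n]$.

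For the sandwich, I would combine the above with the inverse-Wishart mean $Q=\mathbb{E}[(X^TX)^{-1}]=\Sigma^{-1}/(n-p-1)$, giving $\text{tr}(QH)-p = p(p+1)/(n-p-1)$. Substituting into (\ref{F_tilde}) produces $\tilde{F}(\beta)=(n-p-1)\beta^T\Sigma\beta/p$, and in (\ref{U_tilde}) the numerator collapses as $\beta^T H\beta - n B(\tilde{\beta}) = n\beta^T\Sigma\beta - (p+1)\beta^T\Sigma\beta = (n-p-1)\beta^T\Sigma\beta$, yielding the identical value for $\tilde{U}(\beta)$. For $\Breve{\beta}$, the same Wishart inputs feed (\ref{F_Breve}) and (\ref{U_Breve}), but now with $p$ replaced by $p(n-1)/n$ in the variance denominator and $B(\tilde{\beta})$ multiplied by $(1-1/n)$ in the bias numerators. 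These two perturbations break the sandwich cancellation in opposite directions, and direct algebra on the resulting rational functions of $(n,p)$ shows that $\Breve{U}(\beta) - \Breve{F}(\beta) = \Theta(1/n)\cdot\beta^T\Sigma\beta > 0$, certifying an interval of $\sigma^2$ in which $R(\Breve{\beta})$ beats both $R(\hat{\beta})$ and $R(0)$ by an $O(1/n)$ margin.

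I expect the main obstacle to be this final algebraic verification: the sandwich for $\tilde{\beta}$ rests on a delicate cancellation between the $(p+1)$ in the Wishart bias and the $(p+1)$ in the inverse-Wishart mean, and one must carefully track how each of the two $\Breve{\beta}$ perturbations breaks this cancellation and confirm that the leading surviving term is positive of size $\Theta(1/n)$. The remaining ingredients --- Cochran's theorem, the Wishart variance identity via Isserlis, and the inverse-Wishart mean --- are classical multivariate Gaussian tools.
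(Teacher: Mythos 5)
Your proposal is correct, and its overall architecture coincides with the paper's: Wishart second moments for the bias terms, the inverse-Wishart mean $Q=\Sigma^{-1}/(n-p-1)$ for $\text{tr}(QH)$, and direct algebra on the four thresholds (your sandwich computation $\tilde{F}(\beta)=\tilde{U}(\beta)=(n-p-1)\beta^T\Sigma\beta/p$ is identical to the paper's). The one step where you take a genuinely different route is $B(\Breve{\beta})$: the paper decomposes $X^T\textbf{1}_n\textbf{1}_n^TX\beta=X^TX\beta+X^TJX\beta$ with $J=\textbf{1}_n\textbf{1}_n^T-I_n$ and computes the two (uncorrelated) variances separately, whereas you observe via Cochran's theorem that $\mathbb{E}[\Breve{\beta}|X]=H^{-1}S\beta$ with $S=X^TX-n\bar{x}\bar{x}^T\sim W_p(n-1,\Sigma)$, so the same Wishart identity applies with $n$ replaced by $n-1$. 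Your route is cleaner and yields the bias ratio exactly $1-1/n$; the paper obtains $\left(1-1/n\right)\left(1-\frac{1}{n(p+1)}\right)$ in (\ref{l44}), which traces to its intermediate claim $\mathbb{V}ar_X(X^TJX\beta)=n(n-1)\beta^T\Sigma\beta\,\Sigma$. A direct pairing computation over ordered index pairs (or your Cochran argument) gives $\mathbb{V}ar_X(X^TJX\beta)=n(n-1)\left[\beta^T\Sigma\beta\,\Sigma+\Sigma\beta\beta^T\Sigma\right]$; restoring the missing rank-one term reconciles the two computations at exactly $1-1/n$. Both values lie in the stated interval $[1-2/n,\,1-1/n]$, so the theorem's conclusions are unchanged either way, but your version is the sharp one. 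One minor imprecision on your side: $\Breve{U}(\beta)-\Breve{F}(\beta)$ is $\Theta(1/n)\cdot\beta^T\Sigma\beta$ only in the proportional regime $p\asymp n$; for fixed $p$ the gap is considerably larger. This only strengthens positivity, and the $O(1/n)$ claimed in the theorem refers to the risk improvement at $\sigma^2=\tilde{t}(\beta)$ (where $R(\hat{\beta})=R(\tilde{\beta})=R(0)$ and $R(\Breve{\beta})$ is smaller by $\frac{1}{2n}\left[B(\tilde{\beta})+V(\tilde{\beta})\right]$), which your exact ratios deliver immediately.
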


\begin{proof}

Assuming that $x\sim MN(\textbf{0},\Sigma)$, and therefore $X^TX\sim W_p(n,\Sigma)$, it was shown by \cite{christensen2015covariance} that $\mathbb{V}ar_X(X^TX\beta) = n\left[  \beta^T\Sigma\beta\Sigma+\Sigma\beta \beta^T \Sigma \right]$. Moreover, we can write: $H=n\Sigma$ and $Q=\Sigma^{-1}/(n-p-1)$. Putting it back in (\ref{F_tilde}), we can explicitly write the lower threshold value $\tilde{F}(\beta)$ as follows:
\begin{align*}
    \tilde{F}(\beta) &= \frac{\text{tr}\left(\frac{1}{n}\Sigma^{-1}\cdot n \left[ \beta^T\Sigma\beta\Sigma+\Sigma\beta \beta^T \Sigma \right]  \right)}{\text{tr}\left( n\Sigma \Sigma^{-1}\frac{1}{n-p-1} \right) -p } \nonumber 
    = \beta^T\Sigma\beta \frac{n-p-1}{p}  := \tilde{t}(\beta)
\end{align*}
On the other hand, the upper threshold $\tilde{U}(\beta)$ is also equal to  $\tilde{t}(\beta)$:
\begin{align*}
     \tilde{U}(\beta) &= \frac{1}{p}\left[n\cdot \beta^T\Sigma\beta - \text{tr}\left(\frac{1}{n}\Sigma^{-1}\cdot n \left[ \beta^T\Sigma\beta\Sigma+\Sigma\beta \beta^T \Sigma \right]  \right) \right] = \beta^T\Sigma\beta \frac{n-p-1}{p}
\end{align*}


As for $\Breve{\beta}$, it follows immediately from (\ref{V_tilde}) and  (\ref{V_breve}) that $V(\Breve{\beta}) = (1-1/n) V(\tilde{\beta})$, and we are showing similar relationship between $B(\Breve{\beta})$ and $B(\tilde{\beta})$. Denote $J=\textbf{1}_n\textbf{1}_n^T-I_n$, we can write $X^T\textbf{1}_n\textbf{1}_n^TX\beta = X^TX\beta+ X^TJX\beta$, where these two terms are uncorrelated. Moreover, we can show that $\mathbb{V}ar_X(X^TJX\beta) = n(n-1) \beta^T\Sigma\beta\Sigma$.

Using the properties above, and the fact that $\mathbb{E}[X]=\textbf{0}$, we can write $B(\Breve{\beta})$ as follows:
\begin{align}\label{l44}
    B(\Breve{\beta}) &=  \frac{1}{n} \text{tr}\left( H^{-1}\mathbb{V}ar_X\left(X^TX\beta - \frac{1}{n}X^T\textbf{1}_n\textbf{1}_n^TX\beta \right) \right)\\
     &=  \frac{\frac{n-1}{n}\left(p+  \frac{n-1}{n}\right)}{n}\beta^T\Sigma\beta = \left(1-\frac{1}{n}\right) \left(1-\frac{1}{n(p+1)}\right)B(\tilde{\beta})  . \nonumber
\end{align}
We conclude that $ (1-2/n) B(\tilde{\beta})<B(\Breve{\beta}) < (1-1/n) B(\tilde{\beta})$ for every value of $\sigma^2$, and we can use the derivations in (\ref{F_Breve}) and (\ref{U_Breve}) to show that:
\begin{align*}
    \Breve{F}(\beta) < \left(1-\frac{1}{n}\right) \left(1-\frac{1}{n(p+1)}\right)\tilde{t}(\beta)  \hspace{3mm}; \hspace{3mm}
    \Breve{U}(\beta) > \left(1+\frac{1}{n-1}\right)\tilde{t}(\beta).
\end{align*}
When $\sigma^2 \in \left[\Breve{F}(\beta), \Breve{U}(\beta) \right]$, $\Breve{\beta}$ outperforms both the null model and the standard OLS estimator. The maximum difference between $\Breve{\beta}$ and the second best estimator occurs when $\sigma^2 = \tilde{t}(\beta)$, where $ R(\tilde{\beta}) = R(0) =  R(\hat{\beta})$, and is an $O(1/n)$ term. \end{proof}

Theorem \ref{Normal_Covariates_Thm} provides a theoretical analysis for the case of Gaussian covariates, resulting in explicit expressions for the out-of-sample loss of the three estimators under discussion and the null model. Consider an asymptotic setup where $p/n \to \gamma\in(0,1)$ and $\beta^T\Sigma \beta \to \tau^2$ as $n\to \infty$, then for the OLS model with Gaussian covariates, we can simply write: 
\begin{equation*}
    V(\hat{\beta}) \to \frac{\gamma \sigma^2 }{1-\gamma} \hspace{2mm};\hspace{2mm} V(\tilde{\beta})=\gamma \sigma^2  \hspace{2mm};\hspace{2mm} B(\tilde{\beta})\to \gamma \tau^2  \hspace{2mm};\hspace{2mm} B(0) \to \tau^2.
\end{equation*}

Let us now study the above quantities under the following mechanism for generating the covariates $x$: We draw $\tilde{x}\in \mathbb{R}^p$ , having iid components $\tilde{x}_i \sim F$, $i=1,\cdots,p$, where $F$ is any distribution with zero mean, unit variance, and a finite fourth moment $q$. We then set $x=\Sigma^{1/2}\tilde{x}$, where $\Sigma \in  \mathbb{R}^{p\times p}$ is symmetric positive definite, and its smallest eigenvalue is bounded from zero for all $p$. It is easy to show (for example, see the proof of \cite{hastie2019surprises}, Proposition 2), that under this generating mechanism, with mild moment assumptions, the conditional variance term, $V_X(\hat{\beta})=\sigma^2\text{tr}(\Sigma(X^TX)^{-1})$, converges almost surly in $X$ to $\sigma^2\gamma/(1-\gamma)$ as $n,p\to \infty$. Although we cannot rigorously argue that $V(\hat{\beta})= \mathbb{E}_X[V_X(\hat{\beta})]$ converges to the same expression, we may refer it as the ``common behavior'' of the variance term under this setting.

We now show that all other terms are asymptotically equal to the Gaussian case, under this general mechanism. It is clear that $V(\tilde{\beta}), V(\breve{\beta})$ are unchanged since they are independent of the distribution of the covariates. Moreover, the bias term $B(0)$ can be written as follows. 
\begin{equation*}
     B(0) = \frac{1}{n}\beta^TH\beta =   \frac{1}{n}\beta^T \mathbb{E}\left[ \Sigma^{1/2}\tilde{X}^T\tilde{X}\Sigma^{1/2}  \right] \beta  =  \frac{1}{n}\beta^T \Sigma^{1/2} nI_p \Sigma^{1/2}  \beta =  \beta^T\Sigma \beta \to \tau^2 .
\end{equation*}
The next proposition deals with the asymptotic calculation of $B(\tilde{\beta})$.

\begin{prop} \label{P1}
Assume that $x$ is generated as above, then for the OLS model: $ B(\tilde{\beta}) \to \gamma \tau^2$.
\end{prop}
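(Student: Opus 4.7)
The plan is to reduce everything to computing the variance of a scaled Gram matrix times a fixed vector, in the iid coordinate system where the calculations are easiest. First I would write $X = \tilde{X}\Sigma^{1/2}$ with the rows of $\tilde{X}\in\mathbb{R}^{n\times p}$ being iid copies of $\tilde{x}$, and set $\alpha = \Sigma^{1/2}\beta$, so that $X^TX\beta = \Sigma^{1/2}(\tilde{X}^T\tilde{X})\alpha$ and $\mathbb{V}ar_X(X^TX\beta) = \Sigma^{1/2}\mathbb{V}ar(W\alpha)\Sigma^{1/2}$ with $W = \tilde{X}^T\tilde{X}$. Substituting into (\ref{B_tilde_expr}) and using $H = n\Sigma$, the cyclic property of the trace collapses the bias expression to
\[
B(\tilde{\beta}) \;=\; \frac{1}{n^2}\,\text{tr}\bigl(\mathbb{V}ar(W\alpha)\bigr),
\]
with no remaining $\Sigma$ dependence. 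This structural simplification is the key to the proof because it makes the answer depend on $\beta$ and $\Sigma$ only through $\|\alpha\|^2 = \beta^T\Sigma\beta$.

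Next I would compute $\mathbb{V}ar(W\alpha)$ entrywise using the iid structure of $\tilde{X}$. Setting $\xi_i = \tilde{x}_i^T\alpha$ and observing that $(W\alpha)_j = \sum_{i=1}^n \tilde{x}_{ij}\xi_i$, independence across rows reduces each covariance to $n\,\mathbb{C}ov(\tilde{x}_{1j}\xi_1,\tilde{x}_{1j'}\xi_1)$. Expanding $\xi_1$ and invoking the moments $\mathbb{E}[\tilde{x}_{1l}]=0$, $\mathbb{E}[\tilde{x}_{1l}^2]=1$, $\mathbb{E}[\tilde{x}_{1l}^4]=q$, only the index patterns $k=l=j=j'$, $k=l\neq j=j'$, and $\{k,l\}=\{j,j'\}$ survive, since all odd-order terms vanish under the zero-mean assumption. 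A short fourth-moment calculation would then yield
\[
\mathbb{V}ar(W\alpha) \;=\; n\alpha\alpha^T + n\|\alpha\|^2 I_p + n(q-3)\,\mathrm{diag}(\alpha_j^2),
\]
which specializes to the Gaussian formula used in the proof of Theorem \ref{Normal_Covariates_Thm} when $q=3$, providing a reassuring consistency check.

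Finally, taking the trace gives $\text{tr}(\mathbb{V}ar(W\alpha)) = n(p+q-2)\|\alpha\|^2$, so $B(\tilde{\beta}) = (p+q-2)\|\alpha\|^2/n$. Because $q$ is a fixed constant, $(p+q-2)/n \to \gamma$ and $\|\alpha\|^2 \to \tau^2$ by assumption, giving $B(\tilde{\beta}) \to \gamma\tau^2$. The main obstacle I anticipate is the entrywise fourth-moment bookkeeping for $\mathbb{V}ar(W\alpha)$; it is elementary but must be done carefully so that the $(q-3)$ kurtosis term is isolated cleanly. The payoff is precisely that this non-Gaussian correction contributes only an $O(1/n)$ deviation from the Gaussian answer, which is why the asymptotic limit is insensitive to $F$ as long as its fourth moment is finite.
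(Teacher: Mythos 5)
Your proposal is correct and follows essentially the same route as the paper: whitening via $\alpha=\Sigma^{1/2}\beta$, a fourth-moment computation in the iid coordinates giving $\text{tr}\left(\mathbb{V}ar(W\alpha)\right)=n(p+q-2)\|\alpha\|^2$, and hence $B(\tilde{\beta})=(p+q-2)\beta^T\Sigma\beta/n\to\gamma\tau^2$, which matches the paper's expression exactly. The only difference is cosmetic — you compute the full covariance matrix entrywise (isolating the $(q-3)$ kurtosis correction) before tracing, whereas the paper computes the trace directly; your version additionally recovers the Gaussian formula at $q=3$ as a consistency check.
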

\begin{proof}
Using the derivation of $B(\tilde{\beta})$ from (\ref{B_tilde_expr}) and the properties of the distribution $F$, we can show that:  
\begin{align*}
     B(\tilde{\beta})&= \frac{1}{n} \text{tr}\left( H^{-1}\mathbb{V}ar_X(X^TX\beta) \right) =   \frac{1}{n} \text{tr}\left( \frac{1}{n} \Sigma^{-1} \mathbb{V}ar_X(X^TX\beta) \right) \\
     &=   \frac{1}{n^2} \text{tr}\left(   \left[ \left( n(p+q-1)+n(n-1) \right)I_p-n^2I_p  \right]  \Sigma^{1/2}\beta \beta^T \Sigma^{1/2}  \right) = \frac{p+q-2}{n}\beta^T\Sigma \beta .
\end{align*}
We use the fact that $(p+q-2)/n\to \gamma$, as $n,p\to \infty$, to find that $ B(\tilde{\beta}) \to \gamma \tau^2$.\end{proof}

Summarizing Proposition \ref{P1} with previous results, we conclude that the Sandwich phenomenon is the common behavior over a wide class of distributions under the asymptotic setting, with small violation in favor of $\tilde{\beta}$ when $q<3$, and against $\tilde{\beta}$ when $q>3$. However, the requirement that the covariate vector $x$ be expressible as $\Sigma^{1/2}\tilde{x}$ does limit the set of covariate joint distributions for which this result applies (see \cite{rosset2018fixed} for discussion). As we show next, in other scenarios such as distributions that violate this generating mechanism, the estimators $\tilde{\beta}$ may deliver a substantial improvement or deterioration compared to both non-SSL competitors simultaneously.

\subsection{Unsupervised thresholds estimation}\label{Uns_est}
We now extend the discussion to general distribution of the covariates. We point to the fact that for any given vector $\beta$, the threshold values can be estimated using the set of unlabeled data $Z$. For example, we can (precisely, for large enough $m$) estimate $H$ by $(n/m)\sum_{i=1}^m z_iz_i^T$, and  $ \mathbb{E}_X\left[X \right]$ by $(1/m)\sum_{i=1}^m  z_i$. Estimating $Q$ and $\mathbb{V}ar_X(X^TX\beta)$ 
can be done by sampling large amount of covariate matrices $X$ from the set $Z$ and computing the statistics from that sample. We can then derive the threshold values for $\tilde{\beta}$ and $\Breve{\beta}$ by the formulas presented earlier, and determine the usefulness of the semi-supervised learning, free from assumptions over $P_x$. In general, the suggested methodology can be described as follows:
\begin{enumerate}
    \item Derive the formulas for the threshold values according to the learning model and the assumptions on the true model.
    \item Approximately calculate the threshold values by using the unlabeled data on hand.   
    \item Identify the usefulness of the SSL according to the threshold values and some prior knowledge or estimation of the signal-noise combination.
\end{enumerate}

We confirm below by simulation, that this methodology is indeed practical and accurate in many scenarios. We find that unlike the Gaussian covariates case, when the covariates are from Uniform distribution, $\tilde{\beta}$ is effective for some $(\sigma^2,\beta)$ combinations and a substantial improvement may be achieved by applying our SSL approach. Moreover, we show that the adaptive estimator $\beta^D$, delivers uniform improvement over the supervised estimator. The adaptive model chooses between $\hat{\beta}$ and $\Breve{\beta}$ according to estimates of $\sigma^2$ and $\Breve{F}(\beta)$. We estimate $\sigma^2$ by the standard unbiased estimator $\hat{\sigma}^2=RSS(\hat{\beta})/(n-p)$, and $\Breve{F}(\beta)$ by the dedicated formula evaluated at $\Breve{\beta}$, with bias correction to the estimation of $\mathbb{V}ar_X(X^TX\beta)$.


We now demonstrate the unsupervised estimation methodology in another scenario, where the vector $\beta$ is also random. Let us assume that $\beta$ is drawn from prior distribution $P_\beta$ such that $ \mathbb{E}\left[ \beta \right] = \textbf{0}$ and $ \mathbb{E}\left[ \beta \beta^T \right] = \tau^2 I_p$. In this case, the mean out-of-sample prediction error is taken over all possible $\beta$'s. Therefore, the bias terms $B(\tilde{\beta})$ and $B(0)$ can be written as follows:
\begin{align*}
    B(\tilde{\beta}) &= \frac{1}{n} \mathbb{E}_\beta\left[ \text{tr}\left( H^{-1}\mathbb{V}ar_{X}(X^TX\beta) \right) \right]  = \frac{\tau^2}{n} \text{tr}\left( H^{-1}\mathbb{E}_{X}\left[(X^TX)^2 \right] -H \right), \\
    B(0) & = \frac{1}{n}\mathbb{E}_\beta\left[\beta^TH\beta\right]  = \frac{\tau^2}{n} \text{tr}\left( H \right).
\end{align*}

Using the above results, we define the threshold values for the noise-to-signal ratio, $\sigma^2/\tau^2$, where the semi-supervised estimator is effective, as follows:
\begin{align*}
    R(\tilde{\beta})< R(\hat{\beta}) 
    & \iff \frac{\sigma^2}{\tau^2} > \frac{\text{tr}\left( H^{-1}\mathbb{E}_{X}\left[(X^TX)^2 \right] -H \right)}{ \text{tr}\left(QH  \right)-p} := \tilde{F},\\
    R(\tilde{\beta})< R(0) &\iff \frac{\sigma^2}{\tau^2} < \frac{1}{p}     \text{tr}\left(2H - H^{-1}\mathbb{E}_{X}\left[(X^TX)^2\right]\right)   := \tilde{U}.
\end{align*}

The threshold values $\tilde{F}$, and $\tilde{U}$ depend only on the distribution $P_x$ and can be evaluated using the unlabeled data set. We can then use these estimates to determine limits on the usefulness of the semi-supervised estimator $\tilde{\beta}$. In any case that $\sigma^2 / \tau^2 \in [\tilde{F},\tilde{U} ]$, the estimator $\tilde{\beta}$ is effective. If $\tilde{U}\leq \tilde{F}$, then there is no range of $\sigma^2 / \tau^2$ for which the estimator $\tilde{\beta}$ is effective. In the Gaussian covariates case, with $E\left[(X^TX)^2 \right] = n(n+1)\Sigma^2+n\Sigma \cdot \text{tr}(\Sigma)$, we find that the Sandwich phenomenon holds in this scenario as well:
\begin{equation}\label{t_tilde}
    \tilde{F}=\tilde{U}= \text{tr}(\Sigma)(n-p-1)/p : = \tilde{t},
\end{equation}


As for $\Breve{\beta}$, using the derivation in (\ref{l44}), we can show that:
\begin{align*}
     B(\Breve{\beta}) =& \frac{\tau^2}{n} \text{tr}\left( \frac{(n-1)^2}{n^2}\left[ H^{-1}\mathbb{E}_{X}\left[(X^TX)^2 \right] -H \right]\right)\\
     +& \frac{\tau^2}{n^3} \text{tr}\left(   H^{-1}\left[ \mathbb{E}_{X}\left[(X^TJX)^2 \right] - \left( \mathbb{E}_{X}\left[X^TJX \right] \right)^2  \right]  \right),
\end{align*}
and formulas for the threshold values $\Breve{F}$ and $\Breve{U}$ follow in the same manner as for $\tilde{F}$ and $\tilde{U}$. 

In this scenario, an estimate of the noise-to-signal ratio (NSR) is required in order to decide which of the estimators to use (and define $\beta^D$). We suggest the following estimator:
\[ \widehat{NSR}=\frac{\hat{\sigma}^2}{\hat{\tau}^2}  \hspace{2mm};\hspace{2mm} \hat{\tau}^2= max\left\{ \left(\frac{\sum y_i^2}{n} - \hat{\sigma}^2\right)/\text{tr}\left(\mathbb{E}\left[xx^T \right] \right)  , 0 \right\} \hspace{2mm};\hspace{2mm} \hat{\sigma}^2=\frac{RSS(\hat{\beta})}{n-p}.  \]
The estimator $\beta^D$ is equal to $\hat{\beta}$ if $ \widehat{NSR}<\Breve{F}$, equal to $\Breve{\beta}$ if $ \Breve{F}<\widehat{NSR}<\Breve{U}$, and equal to the null estimator otherwise. In any case where $\Breve{U}<\Breve{F}$, $\beta^D$ is equal to $\hat{\beta}$ if $ \widehat{NSR}<\text{tr}(H)/\text{tr}(HQ)$ and equal to the null estimator otherwise.

\subsection{Simulations for true linear model} \label{OLS_Und_Simulations}
We empirically study the predictive performance of the three OLS estimators under discussion, in different problem settings, by two-step simulations:
\begin{enumerate}
\item {\bf Unsupervised estimation.} Evaluating the threshold values according to the data generating mechanism, using a large fixed data-set $\mathcal{Z}$ of $M=5\cdot 10^4$ unlabeled observations of $p=25$ dimension.

\item {\bf Supervised simulation.} Generating $K=5000$ random training sets of $n=50$ labeled observations ($X,Y$) and $m=5000$ unlabeled observations ($Z$), for various values of $\sigma^2$. We fit the three estimators ($\hat{\beta},\tilde{\beta},\Breve{\beta}$) for each one of the training sets and calculate the mean prediction error over the data-set $\mathcal{Z}$. We also set $\beta^D$ according to the decision rule and store its prediction error. The outcome is four curves describing the Random-X prediction error (average over the $K$ samples) changing with $\sigma^2$ for each one of the estimators. For simplicity we denote by $R$, the reducible error $B+V$ (ignoring the fixed component $C$), throughout this subsection.         

\end{enumerate}
We perform experiments in a total of six data generating mechanisms, based on three different distributions for the covariate vector $x$, and two different generating mechanisms for the mean function $f(x)$. The three generating models for $x$ are as follows:

\begin{itemize}
\item \textit{Gaussian}. We choose $x \sim MN(\textbf{0}_p, \Sigma)$, where $\Sigma$ is block-diagonal, containing five blocks
such that all variables in a block have pairwise correlation $\rho=0.9$.
\item \textit{Uniform}. We define $x$ by taking Gaussian random vector as above, then applying the inverse Gaussian
distribution function componentwise. In other words, this can be seen as a Gaussian copula with Uniform marginals.
\item \textit{$t(8)$}. We define $x$ by taking Gaussian random vector as above, then adjust the marginal distributions appropriately to achieve Gaussian copula with $t(8)$ marginals.
\end{itemize}

The three distributions above are scaled to have zero mean and unit marginal variance. The marginal fourth moments are $3$, $1.8$, and $4.5$ respectively. The two generating models for the mean function $f(x) = \mathbb{E}[y|x]$ are as follows:

\begin{itemize}
\item \textit{Constant}-$\beta$. $f(x) = \beta \sum_{j=1}^p x_j \hspace{3mm},\hspace{3mm} \beta \in \{0.25,0.5,\cdots ,1.5 \}$. 
\item \textit{Random}-$\beta$. $f(x) = x^T \beta \hspace{3mm},\hspace{3mm} \beta \sim MN(\textbf{0},I_p) $.
\end{itemize}

In the constant-$\beta$ scenario (Figure \ref{f1}), the unsupervised estimation of the threshold-values is calculated for any value of $\beta$ according to the formulas in Section \ref{Unb_OLS}. We also present the value of $\tilde{t}(\beta=1.5)$ according to Theorem 1 by horizontal dashed black line. The supervised simulation is carried out for $\beta=1.5$ and a range of $\sigma^2$ that covers both $\Breve{F}(\beta=1.5)$ and $\Breve{U}(\beta=1.5)$. On the supervised simulation plots, we mark the constant value of the null risk by a horizontal dashed black line. The estimated threshold-values  $\Breve{F}(\beta=1.5)$ and $\Breve{U}(\beta=1.5)$ are marked by vertical dashed blue lines.

In the random-$\beta$ scenario (Figure \ref{f2}), the unsupervised estimation of the threshold-values is calculated according to the formulas in Section \ref{Uns_est}, referring to the ratio $\sigma^2/\tau^2$. The supervised simulation is carried out for a range of $\sigma^2$ that covers both $\Breve{F}$ and $\Breve{U}$, while $\tau^2$ is fixed at $1$. We mark the estimated threshold-values $\Breve{U}$ and $\Breve{F}$ by vertical blue dashed lines, as well as the value of $\tilde{t}$ according to (\ref{t_tilde}) by vertical black dashed line.

We can see good agreement between the unsupervised simulations and the prediction error curves in the supervised-simulations: the curve of $R(\Breve{\beta})$ intersects with $R(\hat{\beta})$ ($R(0)$) in the estimated point of $\Breve{F}$ ($\Breve{U}$). As expected, $R(\Breve{\beta})$ is lower than $R(\tilde{\beta})$ in all six scenarios. Moreover, we can see that the Gaussian covariates comply with the Sandwich phenomenon while the Uniform ($t(8)$) covariates have some range of $\sigma^2$ for which $\tilde{\beta}$ substantially outperforms (underperforms) both $\hat{\beta}$ and the null model. This demonstrates the role of the fourth moment regarding the effectiveness of the SSL. We can also see that $R(\beta^D)$ is lower than $R(\hat{\beta})$ in all six scenarios, over the entire range, and is the best estimator in the random-$\beta$ scenario.

\begin{figure}[h!]  
\centering
\includegraphics[width=15cm]{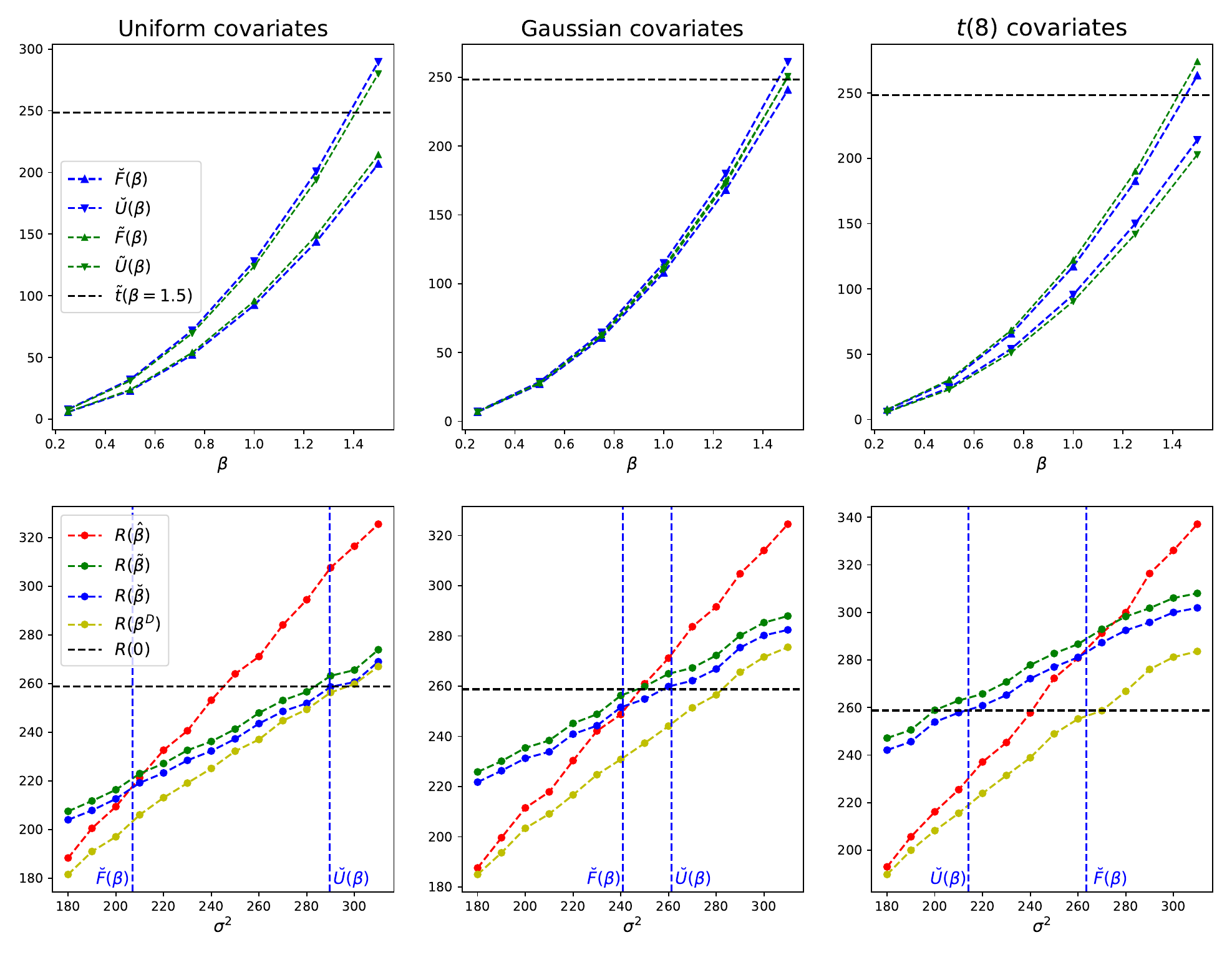}
\caption{Results of simulations for true linear model with constant-$\beta$ link, for Gaussian (middle), Uniform (left), and $t(8)$ covariates (right). The unsupervised estimates are presented at the top, and supervised simulations at the bottom.}
\label{f1} 
\end{figure}

\begin{figure}[h!]  
\centering
\includegraphics[width=15cm]{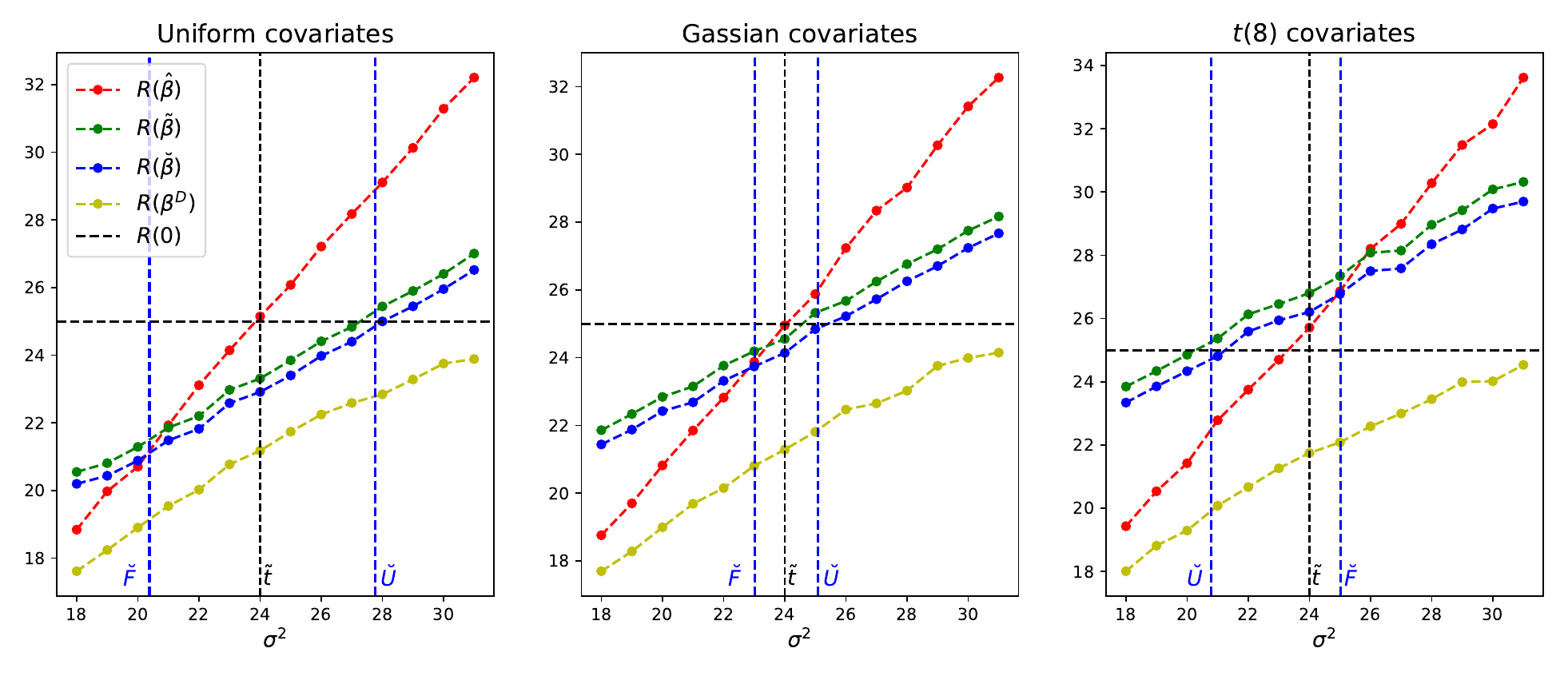}
\caption{Results of simulations for true linear model with with random-$\beta$ link, for Gaussian (middle), Uniform (left), and $t(8)$ covariates (right). }
\label{f2} 
\end{figure}

\subsection{Misspecified linear model}\label{Accounting for bias}
We now consider the case where $E[y|x]=f(x)$ for some function $f$, and the fitted model assumes that $E[y|x]=x^T\beta$. In this case, the bias term of each estimator, after subtracting the constant term $\left(  \mathbb{E}\left[ f(x_0)   \right] \right)^2$, can be written as follows:
\begin{align*}
    B_f(\hat{\beta}) =& \frac{1}{n} \text{tr}\left( H \mathbb{V}ar_X\left( (X^TX)^{-1}A_X  \right) \right) + \mathbb{V}ar_{x_0} \left( x_0^T\mathbb{E}_X[ (X^TX)^{-1}A_X]-  f(x_0)   \right),\\
    B_f(\tilde{\beta}) =&  \frac{1}{n} \text{tr}\left( H^{-1}\mathbb{V}ar_X(A_X ) \right)+ \mathbb{V}ar_{x_0} \left( x_0^TH^{-1} \mathbb{E}_X[A_X]-  f(x_0)   \right),\\
    B_f(\Breve{\beta}) =&  \frac{1}{n} \text{tr}\left( H^{-1}\mathbb{V}ar_X(C_X ) \right)   + \mathbb{V}ar_{x_0} \left( x_0^TH^{-1} \mathbb{E}_X[C_X]-  f(x_0)   \right), \\
     B_f(0) =& \mathbb{V}ar_{x_0}\left(   f(x_0)  \right),
\end{align*}
where $A_X=X^Tf(X)$, and $C_X=\widehat{\mathbb{C}ov}\left( X, f(X) \right)$.

We can use the above expressions to derive the formulas for the threshold values:
\begin{align*}
    R_f(\tilde{\beta})< R_f(\hat{\beta}) &\iff \sigma^2 > \frac{n}{ \text{tr}\left(QH  \right)-p} \left[       B_f(\tilde{\beta}) -  B_f(\hat{\beta})\right] := \tilde{F}_f(\beta),\\
    R_f(\Breve{\beta})< R_f(\hat{\beta}) &\iff \sigma^2 > \frac{n}{ \text{tr}\left(QH  \right)- \frac{n-1}{n}p } \left[       B_f(\Breve{\beta}) -  B_f(\hat{\beta})\right] := \Breve{F}_f(\beta).
\end{align*}
We can see that in this scenario, $\Breve{F}_f(\beta)$ can get negative values when $ B_f(\Breve{\beta}) < B_f(\hat{\beta})$, meaning that  $R_f(\Breve{\beta})< R_f(\hat{\beta})$ for every value of $\sigma^2$. On the other hand:
\begin{align*}
    R_f(\tilde{\beta})< R_f(0) &\iff \sigma^2 < \frac{n}{p} \left[ B_f(0)  -  B_f(\tilde{\beta})  \right] := \tilde{U}_f(\beta),\\
    R_f(\Breve{\beta})< R_f(0) &\iff \sigma^2 < \frac{n}{n-1}\frac{n}{p} \left[ B_f(0)  -  B_f(\Breve{\beta})  \right] := \Breve{U}_f(\beta),
\end{align*}
meaning that the value of $\Breve{U}_f(\beta)$,  can be negative when the model is highly biased.
\subsection{Simulations for misspecified linear model}\label{OLS_Biased_Simulations}
We perform the same experiments as in Section \ref{OLS_Und_Simulations}, but with Gaussian and Uniform covariates only, and different mechanisms for the mean function $f(x)$. The two generating models for the mean function $f(x) = \mathbb{E}[y|x]$ are as follows.

\begin{itemize}
\item Low bias. $f(x) = \beta \sum_{j=1}^p x_j +\delta |x_j| \hspace{1mm},\hspace{1mm} \beta \in \{0.25,0.5,\cdots ,1.5 \} \hspace{1mm},\hspace{1mm} \delta=0.2 $. 
\item High bias. $f(x) = \beta \sum_{j=1}^p x_j +\delta |x_j| \hspace{1mm},\hspace{1mm} \beta \in \{0.25,0.5,\cdots ,1.5 \} \hspace{1mm},\hspace{1mm} \delta=0.4 $. 
\end{itemize}

For every one of the four data-generating mechanisms, we present the unsupervised  estimation of the threshold-values as calculated for any value of $\beta$ according to the formulas in Section \ref{Accounting for bias}. We mark the estimated threshold-values $\Breve{U}(\beta=1.5)$ and $\Breve{F}(\beta=1.5)$  (only when positive) by vertical dashed lines on the supervised simulations plots. For simplicity, we denote by $R$ the reducible error $B+V$, throughout this subsection.         

In the results (Figure \ref{f3}), we can see good agreement between the unsupervised simulations and the prediction error curves in the supervised simulations. In practice, we can identify the usefulness of the semi-supervised estimator $\Breve{\beta}$ for any combination $(\beta,\delta,\sigma^2)$ of interest.   
We can see that $R(\Breve{\beta})$ is substantially lower than $R(\tilde{\beta})$ in all four scenarios. Moreover, we can see that the Gaussian covariates setting does not comply with the Sandwich phenomenon of Theorem 1 in this case. In fact, $\tilde{U}<\tilde{F}$ making $\tilde{\beta}$ the worst estimator in the interval $[\tilde{U},\tilde{F}]$. On the other hand, $\tilde{U}>\tilde{F}$ for Uniform covariates in both cases of low and high bias. We can also see that the threshold value $\Breve{F}$ is negative in the high bias scenario, and the estimator $\Breve{\beta}$ is better than $\hat{\beta}$ for any value of $\sigma^2$. Importantly, $R(\beta^D)$ is uniformly lower than $R(\hat{\beta})$ in all four scenarios with substantial improvement in the high bias scenario, even though the decision rule assumes an unbiased model and uses no prior assumptions over the $(\beta,\delta,\sigma^2)$ combination. 

\begin{figure}[h!]  
\centering
\includegraphics[width=16cm]{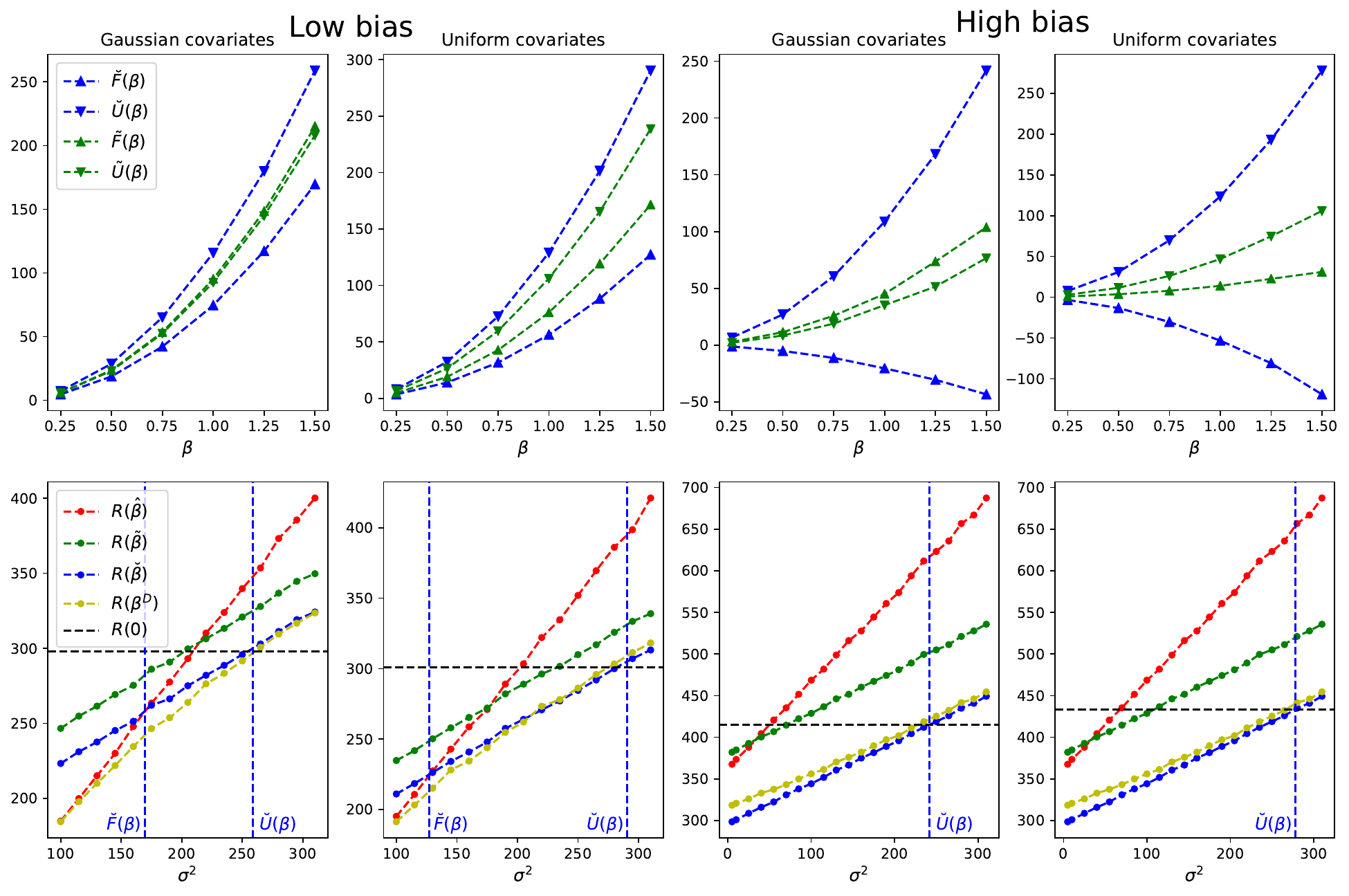}
\caption{Results of simulations for misspecified linear model. The two left plots present the low bias scenario, and the two right high bias. The unsupervised estimates are presented at the top, and supervised simulations at the bottom.}
\label{f3} 
\end{figure}

\section{Semi-supervised GLM-ERM} \label{Semi-supervised GLS-ERM}

\subsection{Semi-supervised gradient descent}

In this section we analyze GLM-ERM for general monotone increasing link-function $g$. For this case, we already defined the estimators $\hat{\beta}$, $\tilde{\beta}$ and $\Breve{\beta}$ in Equations (\ref{e1}), (\ref{e2}) and (\ref{e3}) respectively. In order to analyze the solutions of these optimization problems, we shall now define the gradients for each one of the objective functions. For the supervised procedure, the gradient $\hat{S}(\beta)$, for some vector $\beta \in \mathbb{R}^p$, can be written as $\hat{S}(\beta) =\frac{1}{n}X^T(\mu-Y)$, where $\mu \in \mathbb{R}^n$, with $\mu_{i}=g(x_i^T\beta)$. Under the innocuous assumption of exchangeability  between expected value and derivative, we can write the \textit{semi-supervised gradients}:
\begin{equation*}
    \tilde{S}(\beta) =  \mathbb{E}_x\left[  g(x^T\beta)x \right] - \frac{1}{n}X^TY =    \frac{1}{n} \left( \mathbb{E}_X\left[X^T\mu\right] -X^TY \right) ,
\end{equation*}
\begin{equation*}
    \Breve{S}(\beta) =  \frac{1}{n}\mathbb{E}_X\left[X^T\mu\right] -\left(\mathbb{E}_X[X]\cdot \overline{Y}+ \widehat{\mathbb{C}ov}(X,Y)\right),
\end{equation*}
where $\mathbb{E}_X\left[X^T\mu\right]=  n\mathbb{E}_x\left[ g(x^T\beta)x \right]$.

The notation $\mathbb{E}_X[\cdots]$ stands for the expected value over all possible random covariate matrices $X\in  \mathbb{R}^{n \times p}$ and it is used in order to simplify further discussion. In practice, if we use some sort of gradient descent algorithm to find $\tilde{\beta}$ or $\Breve{\beta}$, the learning procedure utilizes the unlabeled data in every iteration to calculate the semi-supervised gradient at the current point. For example, we will (accurately) estimate $ \mathbb{E}_X\left[X^T\mu\right]$ at the current point $\beta$ by the statistic $(n/m) \sum_{i=1}^m  g(z_i^T\beta)z_i$.

We can see that the objective functions $\hat{L}$, $\tilde{L}$, and $\Breve{L}$ are all convex w.r.t. $\beta$ by looking at the Hessian of each one of them:
\begin{align*}
    \hat{L}''(\beta) &= \frac{1}{n}X^TDX \hspace{2mm};\hspace{2mm} \tilde{L}''(\beta)= \Breve{L}''(\beta) =  \frac{1}{n}\mathbb{E}_X\left[X^TDX \right],
\end{align*}
where $D$ is $n\times n$ diagonal matrix with the terms $D_{ii}=g'(x_i\beta)$. The matrices above are S.P.D since $g$ is a monotone increasing function. Moreover, the unique solution for each one of the loss functions satisfies: $\hat{S}(\hat{\beta}) =\tilde{S}(\tilde{\beta}) = \Breve{S}(\Breve{\beta}) = \textbf{0}$.

We will use the above properties combined with more assumptions in order to analyze and compare the predictive performance of the three estimators. However, any sort of gradient descent algorithm can be applied in a semi-supervised fashion, without any assumptions over the true model or the distribution of the data, as long as the unlabeled data is taken into account in the calculation of the gradients.

\subsection{Predictive performance: approximate analysis}\label{Theoretical analysis}
In the context of GLM theory, the log-likelihood is usually approximated by a quadratic function for optimization and inference. Some works like \cite{lawless1978efficient}, \cite{minkin1983assessing}, and \cite{kredler1986behaviour}, discussed the bounds of the approximation error in terms of the model parametrization. In the latter, \cite{kredler1986behaviour} showed that in general, the nonquadratic tail is of the same order as the quadratic one, for an arbitrary sample size $n$. Nevertheless, quadratic approximation is considered common practice both for theoretical analysis and practical uses like optimization and derivation of confidence intervals.

The supervised optimization objective we presented here, $\hat{L}(\beta)$, is equivalent to the negative log-likelihood in a canonical GLM model, and therefore it is natural to approximate the loss function $L$ by a quadratic function, in order to extend the result from the linear model. Yet, in this work we view the function $g$ more as an activation function associated with modern machine learning models, rather than probabilistic function driven from classical statistical theory. Therefore, we adopt the methodology of quadratic approximation from GLM theory, but analyze the generalization error under the setting of constant conditional variance rather than model-based noise. In Section \ref{D1} we show that the results can be generalized to other models for the conditional variance.   

Recall that $L(\beta;x,y)=G(x^T \beta)+x^T \beta y$, the magnitude of the approximation error depends on the magnitude of $g''=G'''$ which is zero in the linear model. If the link function satisfies that $g''$ is bounded by small value, then $L$ should be reasonably well approximated by a quadratic function over a fair sized region around some point $\beta$. Taking for example the ReLU and Leaky ReLU functions, which are popular in deep learning models \citep{sharma2017activation}, we can see that $g''=0$ at every point except $0$, where it is not defined. It means that if $sign(x^T \beta_1)=sign(x^T \beta_2)$, then the quadratic approximation of $L(\beta_2;x,y)$ around $\beta_1$ is exact. On the other hand, if $sign(x^T \beta_1)\neq sign(x^T \beta_2)$, then a polynomial approximation of any order will fail to converge. 

Another modern activation function is the ELU function, introduced by \cite{clevert2015fast}, and can be written as follows: $g(z;a)=min\left\{a \left(e^{z/a}-1 \right),max\left(0,z\right)  \right\}$. We can see that $g''(z)=0$ when $z>0$, and $g''(z)=e^{z/a}/a \in (0,1/a)$ when $z<0$, meaning that the error of the quadratic approximation depends on the parameter $a$. We will use the ELU function to demonstrate the feasibility of deriving the same threshold values as for the OLS model, at an accuracy that changes with the value of the parameter $a$.

In order to achieve theoretical insights and compare between $\hat{\beta}$, $\tilde{\beta}$, and $\Breve{\beta}$, we assume first that the true model satisfies: $\mathbb{E}[y|x]=g(x^T\beta)$ for some $\beta \in \mathbb{R}^p$, and that the quadratic approximation of the loss function $L$ is arbitrarily good for the link function under discussion. Secondly, we extend the distributional assumptions from Section \ref{Semi-supervised OLS}, taking the assumptions that $(X^TDX)^{-1}$ exists with probability $1$, and the relevant expectations are well defined. The quadratic approximation of $\hat{L}(\hat{\beta})$ around the real $\beta$, can be written as follows:  
\begin{equation*}
    \hat{L}(\hat{\beta}) \approx L(\beta)+(\hat{\beta}-\beta)^TL'(\beta) +\frac{1}{2}(\hat{\beta}-\beta)^TL''(\beta)(\hat{\beta}-\beta).
\end{equation*}
Differentiating  both sides according to $\hat{\beta}$, since $\hat{L}'(\hat{\beta})=\hat{S}(\hat{\beta}) =\textbf{0}$, we get:
\begin{align*}
    \textbf{0} &\approx \hat{L}'(\beta)- \hat{L}''(\beta)\beta+ \hat{L}''(\beta)\hat{\beta} \implies\\
    \hat{\beta} &\approx  \beta - \left(\hat{L}''(\beta)\right)^{-1}\left[\hat{L}'(\beta) \right] 
    = \beta - \left(X^TDX\right)^{-1}X^T(\mu-Y):= \beta -\hat{a}.
\end{align*}
In the same manner we can show that:
\begin{align*}
    \tilde{\beta}  &\approx   \beta -H^{-1}\left(\mathbb{E}_X[X^T\mu] -X^TY\right) :=  \beta -\tilde{a} \\
    \Breve{\beta} &\approx \beta - H^{-1}\left(\mathbb{E}_X[X^T\mu] -n\left(\mathbb{E}_X[X]\cdot \overline{Y}+ \widehat{\mathbb{C}ov}(X,Y) \right) \right):=  \beta -\Breve{a},
\end{align*}
where $H= \mathbb{E}_X[X^TDX]$. We can see that $\mathbb{E}[\hat{a} | X] = 0$, which means that $\hat{\beta}$ is (approximately) unbiased estimator of $\beta$ for any covariate matrix $X$. On the other hand, for $\tilde{a}$ and $\Breve{a}$, only the unconditional expected value is equal to zero:
\begin{align*}
    \mathbb{E}[\tilde{a}] 
    = \mathbb{E}_X\left[H^{-1}\left(\mathbb{E}_X[X^T\mu] -X^T\mathbb{E}[Y|X]\right)   \right] =H^{-1}\left( \mathbb{E}_X[X^T\mu] - \mathbb{E}_X[X^T\mu]   \right) =\textbf{0}.\nonumber
\end{align*}
We note that $\mathbb{E}[\tilde{a}|X]$ can be nonzero as $\mathbb{E}_X[X^T\mu] -X^T\mu$ may be nonzero. In the same manner we can show that $\mathbb{E}[\Breve{a}]= 0$. We will use these results to explicitly write the prediction error of each one of the estimators.

Assume we have some estimator $\dot{\beta}$ of $\beta$, of the form: $\dot{\beta}=\beta- \dot{a}$, where $\dot{a}$ is a random vector generated by the training set $T=(X,Y)$, and $\mathbb{E}_T[\dot{a}]= 0$. The quadratic approximation for the mean out-of-sample loss of $\dot{\beta}$ can be written as follows:
\begin{align*}
    R(\dot{\beta}) 
    \approx R(\beta)+ \frac{1}{2} \mathbb{E}_{T,x_0}\left[(\dot{\beta}-\beta)^TL''(\beta,x_0)(\dot{\beta}-\beta)\right] = R(\beta) +\frac{1}{2n}\mathbb{E}_{T}\left[ \dot{a}^TH\dot{a} \right], 
\end{align*}
where $R(\beta)= \mathbb{E}_{x_0,y_0}\left[ L(\beta,x_0,y_0) \right ]= \mathbb{E}_{x_0}\left[G(x_0^T\beta) - x_0^T\beta g(x_0^T\beta)\right ]$.

Focusing on the term $\mathbb{E}_{T}\left[ \dot{a}^TH\dot{a} \right]$, we use the fact that $\mathbb{E}[Y|X]=\mu$, and  $\mathbb{E}[YY^T|X]=\mu \mu^T +I_n\sigma^2$ (constant conditional variance), to show that:
\begin{align*}
    \mathbb{E}_T\left[ \tilde{a}^T H  \tilde{a} \right] 
    &= \text{tr}\left(H^{-1}\mathbb{V}ar_X(X^T\mu) \right)+\sigma^2 \text{tr}\left( H^{-1}\mathbb{E}[X^TX] \right).
\end{align*}
The left term in the expression above does not depend on $\sigma^2$ and can viewed as the bias term $B(\tilde{\beta})$, and the right term can viewed as the variance term $V(\tilde{\beta})$, multiplied by $2n$. In the same manner we can show that:
\begin{align*}
    \mathbb{E}_T\left[ \Breve{a}^T H  \Breve{a} \right] &= \text{tr}\left(H^{-1}\mathbb{V}ar_X\left( n\mathbb{E}[X]\overline{\mu} + n\widehat{ \mathbb{C}ov}(X,\mu) \right) \right) \nonumber\\
    &+ \frac{n-1}{n}\sigma^2 \text{tr}\left( H^{-1}\mathbb{E}[X^TX] \right) := 2n\left[B(\Breve{\beta})+V(\Breve{\beta})\right] .
\end{align*}
On the other hand, for the supervised estimator we find that:
\begin{align*}
     \mathbb{E}_T\left[ \hat{a}^T H  \hat{a} \right] 
     &= \sigma^2 \text{tr}\left( \mathbb{E}_X\left[(X^TDX)^{-1}X^TX(X^TDX)^{-1}  \right]H \right) :=\sigma^2 \text{tr}\left(QH  \right):= 2n V(\hat{\beta}), \nonumber
\end{align*}

Assuming that the approximation error is negligible in the context of comparing between $R(\Breve{\beta})$, $R(\tilde{\beta})$, and $R(\hat{\beta})$, then the comparison between the three estimators depends only on the magnitude of $\mathbb{E}_{T}\left[ \dot{a}^TH\dot{a} \right]$. We conclude that as long as $\text{tr}\left(QH  \right)$ is greater than $\text{tr}\left( H^{-1}\mathbb{E}[X^TX] \right)$, we have a bias-variance trade-off between the supervised learning and the SSL methods. We can use the unlabeled data to ensure that this condition holds, and if so, we can write the lower threshold values as follows:
\begin{align*}
     R(\tilde{\beta})< R(\hat{\beta}) 
     & \iff \sigma^2 > \frac{\text{tr}\left(H^{-1} \mathbb{V}ar_X(X^T\mu) \right)}{ \text{tr}\left(QH  \right)-\text{tr}\left( H^{-1}\mathbb{E}[X^TX] \right)} := \tilde{F}(\beta),\\
     R(\Breve{\beta})< R(\hat{\beta}) 
     & \iff \sigma^2 > \frac{\text{tr}\left(H^{-1}\mathbb{V}ar_X\left( n\mathbb{E}[X]\overline{\mu} + n\widehat{ \mathbb{C}ov}(X,\mu) \right) \right)}{ \text{tr}\left(QH  \right)-\frac{n-1}{n}\text{tr}\left( H^{-1}\mathbb{E}[X^TX] \right)} := \Breve{F}(\beta).
\end{align*}
This result generalizes the previous result for the OLS model. Note that taking $D=I_n$ and $\mu=X\beta$, we get the same expressions as in the OLS model. 

The out-of-sample loss of the null model can be written as follows:
\begin{align*}
    R(0) &= \mathbb{E}\left[ L(\mu_0,x_0,y_0) \right] = \mathbb{E}_{x_0} \left[ G(g^{-1}(\mu_0)) -g^{-1}(\mu_0) \mathbb{E}[y_0|x_0] \right] = G(g^{-1}(\mu_0)) -g^{-1}(\mu_0)\mu_0.
\end{align*}
We can now use $R(0)$ to write the upper threshold values as follows:
\begin{align*}
     R(\tilde{\beta})< R(0) 
     &\iff  \sigma^2 < \frac{ 2n\left[ R(0)- R(\beta) \right]  - \text{tr}\left(H^{-1} \mathbb{V}ar_X(X^T\mu) \right)}{ \text{tr}\left( H^{-1}\mathbb{E}[X^TX] \right)} := \tilde{U}(\beta),\\
    R(\Breve{\beta})< R(0) 
     & \iff \sigma^2 < \frac{ 2n\left[ R(0)- R(\beta) - B(\Breve{\beta}) \right] }{ \frac{n-1}{n}\text{tr}\left( H^{-1}\mathbb{E}[X^TX] \right)} := \Breve{U}(\beta).
\end{align*}
The above formulas for the threshold values can be used to identify combinations $(\sigma^2,\beta)$ where the SSL is useful for improving performance of GLM-ERM models. In the next subsection we demonstrate the above insight in empirical study, concluding that the methodology of unsupervised thresholds estimation is indeed accurate in the context of GLM.

In order to define the adaptive estimator $\beta^D$, we need a suitable estimator of $\sigma^2$ for this setting. We provide an approximated unbiased  estimator, based on the expectation of the quadratic approximation of $RSS(\hat{\beta})$. We find that:
\begin{align*}
    \mathbb{E}_{X Y}\left[RSS(\hat{\beta}) \right] & \approx \sigma^2\left[ n-2p+\text{tr}\left( \mathbb{E}_X\left[X^TX(X^TDX)^{-1}X^TD^2X   (X^TDX)^{-1}\right]\right) \right],
\end{align*}
and we suggest to evaluate the matrix $D$ at $\Breve{\beta}$, resulting in the following estimator: 
\begin{align*}
    \hat{\sigma}^2=RSS(\hat{\beta})/\left(n-2p+\text{tr}\left( \mathbb{E}_X\left[X^TX(X^TDX)^{-1}X^TD^2X   (X^TDX)^{-1}\right]\right) \right) \hspace{1mm};\hspace{1mm} D_{ii}=g'(x_i^T\Breve{\beta}).
\end{align*}
We note that if $D=I_n$, this estimator coincides with standard estimator of the linear model. Simulations show that this estimator is indeed an almost-unbiased estimator of $\sigma^2$ in the experiments setting.  



\subsection{Simulations for GLM-ERM}\label{GLM_Simulations}
We perform the same experiments as in Section \ref{OLS_Und_Simulations}, but with Gaussian covariates only, and different mechanisms for the mean function $f(x)$. The two generating models for the mean function $f(x) = \mathbb{E}[y|x]$ are ELU$(\beta \sum_{j=1}^p x_j;1)$, and ELU$(\beta \sum_{j=1}^p x_j;4)$, Moreover, we set $p=10$, and denote by $R$ the reducible error, as in Secs. \ref{OLS_Und_Simulations}, \ref{OLS_Biased_Simulations} ,subtracting the fixed component $R(\beta)$.



For every random training set $(X,Z,Y)$, we implement the classical Newton-Raphson method for fitting $\hat{\beta}$, and a semi-supervised version of it for fitting $\tilde{\beta}$ and $\Breve{\beta}$. The method for fitting $\Breve{\beta}$ is summarized by the following updating step:  
\[\Breve{\beta}^{(t+1)} =  \Breve{\beta}^{(t)} -\left(\frac{1}{m}H_{m,t}\right)^{-1} \left(\frac{1}{m} Z^Tg(Z\Breve{\beta}^{(t)})  -\overline{ Z}\cdot \overline{Y} -  \widehat{\mathbb{C}ov}(X,Y)  \right),  \]
where $H_{m,t} = Z^T D_{m,t} Z$, and $D_{m,t}$ is $m\times m$ diagonal matrix with terms $[D_{m,t}]_{ii}= g'(z_i^T\Breve{\beta}^{(t)})$.




In the results (Figure \ref{f4}), we can see a reasonable agreement between the unsupervised estimations ($\Breve{F}$ and $\Breve{U}$), and the supervised results, but not as good as in the OLS models. We attribute this to the error of the quadratic approximation. Compared to the OLS model with Gaussian covariates, here $R(\Breve{\beta})$ is substantially lower than the non-SSL competitors over a wide range of $\sigma^2$. We can also see that $R(\Breve{\beta})$ is uniformly lower than $R(\tilde{\beta})$, and more importantly, that $R(\beta^D)$ is uniformly lower than $R(\hat{\beta})$.

\begin{figure}[h!]  
\centering
\includegraphics[width=16cm]{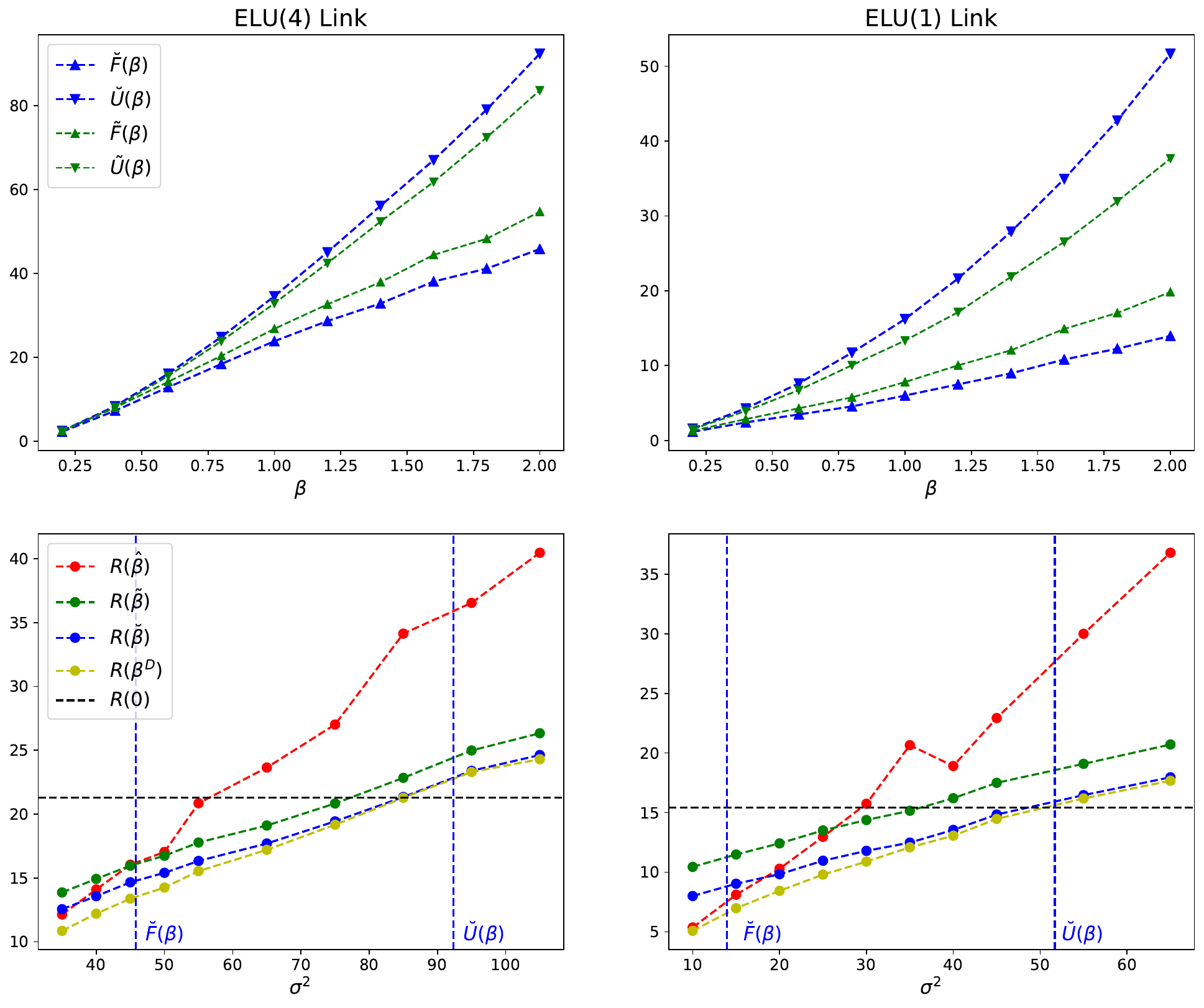}
\caption{Results of simulations for ELU link model with Gaussian covariates.}
\label{f4} 
\end{figure}

\section{Discussion} \label{Discussion}

\subsection{Summary}

In this work, we presented a general methodology for using unlabeled data to design semi-supervised variants of the ERM learning process. We focused on generalized linear models, and showed that it is possible to identify from the data the usefulness of the SSL in reducing the prediction error. We provided evidence that in some cases, having unlabeled data can lead to substantial improvement in prediction by applying the suggested methodology. In the classical linear regression model with Gaussian covariates or asymptotic setting, we provided a theoretical result, stating that the suggested estimators $\tilde{\beta}$, $\Breve{\beta}$ fail to achieve substantial improvement over the supervised model, except where the null model is superior to both supervised and SSL approaches. However the adaptive empirical estimator we propose, $\beta^D$, might achieve substantial improvement in this setting with noisy data while ensuring no deterioration when the noise is low. In all our experiments $\Breve{\beta}$ was better than the intuitive $\tilde{\beta}$, this is rigorously proven in Theorem \ref{Normal_Covariates_Thm} for the above setting. More generally this property holds if $ \mathbb{C}ov \left( x_{ij}g(x_{l}^T\beta), x_{ik}g(x_{l}^T\beta)   \right)  \leq \mathbb{C}ov \left( x_{ij}g(x_{i}^T\beta), x_{ik}g(x_{i}^T\beta)   \right),$ for any $i\neq l$, and $j,k \in \{1\cdots p\}$, which is intuitively true and supported by the simulations.

The generalized linear models discussed in this work can be viewed as a single-layer neural network, and the optimization algorithm presented here can easily be extended to networks with more than one layer. Although theoretical analysis of the usefulness of semi-supervised deep models is beyond the scope of this work, we find it a very interesting topic for future study. Other possible extensions to the scenarios studied in this work, that can be relevant for improving real-life predictive models, are discussed below in brief and can be relevant for future study.

\subsection{Non-constant conditional variance} \label{D1}
Throughout this work we assumed a constant conditional variance, $\mathbb{V}ar(y|x)=\sigma^2$, and the derivation of out-of-sample loss was according to this assumption. The given results can be generalized to other models for the conditional variance as long as they are taken into account in the derivation of $\mathbb{E}_{T}\left[ \dot{a}^TH\dot{a} \right]$. For example, a common assumption in GLM theory is that $\mathbb{V}ar(y|x)=g'(x^T\beta)\sigma^2$, where $\sigma^2$ is the dispersion parameter, naturally equal to $1$. In this case, we use $\mathbb{E}[YY^T|X]=\mu \mu^T +D\sigma^2$, to show that:
\begin{align*}
    \mathbb{E}_T\left[ \tilde{a}^T H  \tilde{a} \right] 
    &= \text{tr}\left(H^{-1}\mathbb{V}ar_X(X^T\mu) \right)+\sigma^2p,\\
    \mathbb{E}_T\left[ \hat{a}^T H  \hat{a} \right] &= \text{tr}\left( \mathbb{E}\left[(X^TDX)^{-1} H (X^TDX)^{-1}X^TDX\right]\sigma^2 \right)=\sigma^2 \text{tr}\left(QH  \right), 
\end{align*}   
where $Q=E\left[(X^TDX)^{-1}\right]$. Since $\text{tr}\left(QH  \right) \geq p$, we have a bias-variance trade-off between the supervised and the semi-supervised estimators and the expressions of the threshold values can be found. This example demonstrates the flexibility of the suggested methodology regrading the assumptions on the true model. Different assumptions will result in different expressions for the threshold values. Then, the estimation of these values using the set of unlabeled data is straightforward.  

\subsection{Different criterion for model comparison}
Throughout this work, we assume that the out-of-sample loss function is the same as the training loss function (hence ERM). In general, a model can be fitted using the loss function $L$ for the purpose of convexity, and be judged according to another criterion. For example, assume that we fit a GLM model to find an estimator $\dot{\beta}$, but we are interested in the out-of-sample squared error. In this case, regardless of the link function $g$, we can write the out-of-sample loss as follows:
\begin{align*}
    R(\dot{\beta}) 
     =& \sigma^2 +   \mathbb{E}_{X,x_0}\left( \mathbb{E}\left[g(x_0^T\dot{\beta})|X,x_0  \right] -g(x_0^T\beta) \right)^2 +  \mathbb{E}_{X,x_0}\left[ \mathbb{V}ar\left(g(x_0^T\dot{\beta})|X,x_0  \right) \right] \nonumber \\
    =&\sigma^2 + B(\dot{\beta})+V(\dot{\beta}). 
\end{align*}
We can derive the relevant approximated expressions for $R(\hat{\beta})$ and $R(\tilde{\beta})$ according to the assumed conditional variance. In the case of constant conditional variance, we can write:
\begin{align*}
    B(\hat{\beta}) &\approx  0 \hspace{2mm};\hspace{2mm} V(\hat{\beta}) \approx  \frac{\sigma^2}{n}\text{tr}\left(H_2Q \right),\\
   B(\tilde{\beta}) &\approx  \frac{1}{n}\text{tr}\left(H^{-1}H_2H^{-1}\mathbb{V}ar_X(X^T\mu)\right) \hspace{2mm};\hspace{2mm} V(\tilde{\beta}) \approx  \frac{\sigma^2}{n}\text{tr}\left(H^{-1}H_2H^{-1}\mathbb{E}[X^TX] \right),
\end{align*}
where $Q=\mathbb{E}\left[(X^TDX)^{-1}X^TX(X^TDX)^{-1}\right]$, and $H_2=\mathbb{E}\left[X^TD^2X\right]$.

The relevant threshold values can be calculated according to the above expressions, and the effectiveness of the unlabeled data can be identified  for this particular case. This is another flexibility of the suggested methodology that can be further investigated.

\subsection{Adding regularization}
Adding regularization terms (like ridge or lasso) can be done in a straightforward way in the semi-supervised optimization argument. In turn, the semi-supervised gradient can be modified and calculated according to the regularization method and parameters, resulting in different fitted estimators. Initial experiments showed that the prediction error can be improved by adding ridge regularization to semi-supervised GLM-ERM model. However, a dedicated theoretical analysis is required in order to derive the threshold values and identify the usefulness of the unlabeled data in regularized modeling.  

\bibliography{main}


\end{document}